\newif\ifspacehack
\def \R {\mathbb{R}}
\newcommand{\calB}{{\mathcal{B}}}
\newcommand{\calX}{{\mathcal{X}}}
\newcommand{\calF}{{\mathcal{F}}}
\newcommand{\calH}{{\mathcal{H}}}
\newcommand{\calD}{{\mathcal{D}}}
\newcommand{\calG}{{\mathcal{G}}}
\newcommand{\calR}{{\mathcal{R}}}
\newcommand{\calY}{{\mathcal{Y}}}
\newcommand{\cF}{{\mathcal{F}}}
\newcommand{\E}{{\mathbb{E}}}
\newcommand{\fhat}{\widehat{f}}
\newcommand{\order}{\mathcal{O}}
\newcommand{\Dcal}{\mathcal{D}}
\newcommand{\Fcal}{\mathcal{F}}
\newcommand{\Hcal}{\mathcal{H}}
\newcommand{\Ocal}{\mathcal{O}}
\newcommand{\one}{\boldsymbol{1}}
\newcommand{\unif}{\text{\rm Unif}}
\DeclareMathOperator*{\argmin}{argmin}
\DeclareMathOperator*{\argmax}{argmax}
\newcommand{\wh}{\widehat}
\newcommand{\rhat}{\wh{r}}
\newcommand{\AlgSq}{\ensuremath{\mathsf{AlgSq}}\xspace}
\newtheorem{lemma}{Lemma}
\newtheorem{assumption}{Assumption}
\def \E {\mathbb{E}}
\def \R {\mathbb{R}}
\def \Pr {\mathsf{Pr}}
\newcommand{\RegSq}{\ensuremath{\mathrm{\mathbf{Reg}}_{\mathsf{Sq}}}\xspace}
\newcommand{\RegCB}{\ensuremath{\mathrm{\mathbf{Reg}}_{\mathsf{CB}}}\xspace}
\newtcolorbox{promptbox}[1][]{
  colback=blue!5!white, colframe=blue!75!black,
  fonttitle=\bfseries, title=Prompt,
  left=2mm, right=2mm, top=2mm, bottom=2mm,
  boxrule=0.5mm,  
  coltitle=black, 
  colbacktitle=blue!15!white, 
  breakable,      
  #1
}
\definecolor{wine_red}{RGB}{228,48,64}
\definecolor{DSgray}{cmyk}{0,1,0,0}
\newcommand{\pref}[1]{\prettyref{#1}}
\newcommand{\savehyperref}[2]{\texorpdfstring{\hyperref[#1]{#2}}{#2}}
\def \epsilon {\varepsilon}
\newcommand{\bra}[1]{\left[#1\right]}
\newcommand{\pa}[1]{\left(#1\right)}
\newcommand{\hhat}{\wh{h}}
\newcommand{\fl}{\underline{f}^\star}
\title{Provably Efficient Interactive-Grounded Learning with Personalized Reward}
\author{%
  Mengxiao Zhang \thanks{Equal contribution.} \\
  University of Southern California\\
  \texttt{mengxiao.zhang@usc.edu}
  \And
  Yuheng Zhang \footnotemark[1] \\
  University of Illinois Urbana-Champaign \\
\texttt{yuhengz2@illinois.edu}
\And 
\hspace*{-28pt} Haipeng Luo\\
\hspace*{-26pt} University of Southern California\\
\hspace*{-26pt}\texttt{haipengl@usc.edu}
\And 
Paul Mineiro\\
Microsoft Research\\
\texttt{pmineiro@microsoft.com}
}
\begin{document}

\maketitle

\begin{abstract}
Interactive-Grounded Learning (IGL)~\citep{xie2021interaction} is a powerful framework in which a learner aims at maximizing unobservable rewards through interacting with an environment and observing reward-dependent feedback on the taken actions.
To deal with personalized rewards that are ubiquitous in applications such as recommendation systems,
\citet{maghakian2022personalized} study a version of IGL with context-dependent feedback, but their algorithm does not come with theoretical guarantees.
In this work, we consider the same problem and provide the first provably efficient algorithms with sublinear regret under realizability.
Our analysis reveals that the step-function estimator of prior work can deviate uncontrollably due to finite-sample effects. Our solution is a novel Lipschitz reward estimator which underestimates the true reward and enjoys favorable generalization performances. Building on this estimator, we propose two algorithms, one based on explore-then-exploit and the other based on inverse-gap weighting. We apply IGL to learning from image feedback and learning from text feedback, which are reward-free settings that arise in practice. Experimental results showcase the importance of using our Lipschitz reward estimator and the overall effectiveness of our algorithms.

\end{abstract}

\section{Introduction}\label{sec:intro}
Traditional bandit problems \citep{auer2002finite} or reinforcement learning problems \citep{sutton2018reinforcement} assume that the learner has access to the reward, which is her learning goal. However, such explicit reward information is usually difficult to obtain in many real-world scenarios, including human-computer interface applications \citep{pantic2003toward,freeman2017multimodal} and recommender systems \citep{yi2014beyond,wu2017returning}. Recently, \citet{xie2021interaction} study a new setting called Interactive-Grounded Learning (IGL), where the learner interacts with the environment and receives some feedback on her actions instead of  explicit reward signals. The learner needs to discover the implicit information about the reward in the feedback and maximizes the reward. 

From an information-theoretic perspective, IGL  is intractable unless further assumptions are made. \citet{xie2021interaction} introduce a conditional independence assumption which states that the feedback is conditionally independent of the action and context given the latent reward. However, this assumption is unrealistic in many scenarios. For example, different users interact with recommender systems in different ways even under the same latent reward \citep{maghakian2022personalized}. This inspires us to study IGL with personalized rewards, a setting where the feedback depends on the context. Although \citet{maghakian2022personalized} study this for recommender systems, they only provide empirical results of their approach, leaving the following question open:
\begin{center} \it
Can we design provably efficient algorithms for interactive-grounded learning when the feedback depends on the context given the latent reward?
\end{center}
\paragraph{Contributions.} In this paper, we answer the question in the positive and provide the first provably efficient algorithms with sublinear regret guarantee for IGL with personalized reward. To achieve this, in \pref{sec:reward}, we first propose a novel reward estimator via inverse kinematics. Specifically, using the samples collected by applying a uniform policy, we construct a \emph{Lipschitz reward}, which underestimates the reward for all the policies but approximates the reward for the optimal policy well. With this reward estimator, we propose two algorithms, one based on Explore-then-Exploit (\pref{sec:off-policy}) and the other based on inverse-gap weighting (\pref{sec:on-policy}). Both algorithms achieve $\order(T^{\frac{2}{3}})$ regret. To the best of our knowledge, we are the first to propose provable regret guarantees for IGL with personalized reward. In \pref{sec:experiment}, we implement both algorithms and apply them to both an image classification dataset and a conversational dataset. The empirical performance showcases the effectiveness of our algorithm and the importance of using the Lipschitz reward estimator.

\subsection{Related Work}
\paragraph{Interaction-Grounded Learning (IGL).} \citet{xie2021interaction} is the first work studying IGL and assumes that the feedback is independent of the context and action conditioned on the latent reward. \citet{xie2022interaction} further relaxes the assumption to an action-inclusive version, where the feedback is independent of context conditioned on both the action and the reward. \citet{hu2024information} follows the same setting as \citet{xie2021interaction} but proposes an information-theoretic approach to enforce the conditional independence assumption. However, as we mentioned before, context-dependent feedback is ubiquitous in real-word applications, so such conditional independence assumptions reduce the generality of the IGL framework. \citet{maghakian2022personalized} is the closest work to ours which also considers personalized rewards. Nonetheless, their work focuses on the empirical side and does not provide any theoretical guarantees.

\paragraph{Contextual online learning with partial feedback.} Our work is closely related to the recent trend of designing efficient contextual learning algorithms with partial feedback, including contextual bandits~\citep{langford2007epoch,agarwal2012contextual,agarwal2014taming,foster2018contextual,foster2018practical,foster2020beyond,simchi2021bypassing}, where the learner receives explicit reward signal of her taken action; contextual bandits with feedback graphs~\citep{zhang2024efficient,zhang2024practical}, where the learner's observation of the reward is determined based on a feedback graph; and contextual partial monitoring~\citep{bartok2012partial,kirschner20a}, where the learner's observation is defined by a signal matrix or a linear observation operator. We adopt and generalize the ideas for designing contextual bandits algorithms~\citep{foster2020beyond} to design algorithms for IGL.

\section{Preliminary}\label{sec:pre}

\paragraph{Problem setup.} Throughout the paper, we use $[N]$ to denote the set $\{1,2,\dots,N\}$ for a positive integer $N$. The problem of online Interative-Grounded Learning (IGL) with personalized reward we consider is defined as follows. The interaction between the learner and the environment lasts for $T$ rounds. At each round $t\in[T]$, the environment reveals a stochastic context $x_t\in\calX$ i.i.d. drawn from an unknown distribution $\calD$, and the learner decides an action $a_t\in[K]$ from a finite action set of size $K$ based on this context. Then, different from the classic contextual bandits in which the learner observes the binary reward of her chosen action $r(x_t,a_t)\in\{0,1\}$, she receives feedback $y_t\in\calY$.

\paragraph{Feedback dependence assumption.} We follow the assumption proposed in~\citep{maghakian2022personalized} that the feedback is conditionally independent of the action given the context and the realized reward.
\begin{assumption}\label{assum:feedback}
    For arbitrary $(x, a, r, y)$ tuple where the reward $r$ and the feedback $y$ are generated based on the context $x$ and action $a$, we assume $y$ is conditionally independent of action $a$ given context $x$ and reward $r$. In other words, we assume that $y\perp a~\vert~r,x$.
\end{assumption}

As mentioned, compared to prior work~\citep{xie2021interaction, xie2022interaction},
this better captures many real-world applications such as recommender systems where different users interact with them in different ways~\citep{beel2013impact,shin2020users,maghakian2022personalized}.

\paragraph{Realizability.} We assume that the learner has access to two function classes $\calF\subseteq(\calX\times[K]\mapsto[0,1])$ and $\Phi\subseteq(\calX\times\calY\mapsto\{0,1\})$, where $\calF$ characterizes the mean of the reward for a given context-action pair, and $\Phi$ characterizes the realized reward given the context and the received feedback. A policy $\pi: \mathcal{X} \rightarrow \Delta(K)$ specifies the action probability conditioned on the context. For each $f \in \calF$, we use $\pi_f$ to denote the induced policy which takes action greedily according to $f$, that is, $\pi_f(a|x)=\mathbbm{1}\{a=\argmax_{a' \in [K]} f(x,a')\}, \forall a \in[K]$. We also use the shorthand $\pi^\star$ for the optimal policy $\pi_{f^\star}$. We then make the following realizability assumption following previous contextual bandits literature~\citep{agarwal2012contextual,foster2018practical,foster2020beyond,simchi2021bypassing}.
\begin{assumption}[Realizability]\label{assum:realizability}
    There exists a regression function $f^\star\in\calF$ such that $\E[r(x_t,a)|x_t]=f^\star(x_t,a)$ for all $a\in[K]$ and $t\in[T]$. Furthermore, there exists a feedback decoder $\phi^\star\in\Phi$ such that $\phi^\star(x_t,y_t)=r(x_t,a_t)$ for all $t\in[T]$. 
\end{assumption}
For simplicity, we also assume that $\calF$ and $\Phi$ are finite with cardinality $|\calF|$ and $|\Phi|$. Our results can be further generalized to broader function classes, which will be discussed in later sections.

\paragraph{Identifiability.} As mentioned in~\citep{xie2021interaction,xie2022interaction,maghakian2022personalized}, it is impossible to learn if we do not break the symmetry between reward being $1$ and being $0$. Following~\citep{maghakian2022personalized}, we make the following assumption:
\begin{assumption}[Identifiability]\label{assum:sparse}
    For any $x\in\calX$, $f^\star$ defined in~\pref{assum:realizability} satisfies that: 1) $\sum_{a=1}^Kf^\star(x,a)\leq \alpha$ for some $0<\alpha<\frac{K}{2}$; 2) $\max_{a\in[K]}f^\star(x,a)\geq\theta$ where $\theta>\frac{\alpha}{K-\alpha}$.
\end{assumption}
The first condition says that the sum of the expected reward over actions is less than $\frac{K}{2}$ given any context $x$. That is to say, the reward vector is sparse if $f^\star(x,a)\in\{0,1\}$. The second condition says that for each context $x\in\calX$, there exists an action that achieves a large enough expected reward. These two conditions are indeed satisfied by many real-world applications, including the $s$-multi-label classification problem (with $s<K/2$) where $\alpha=s$ and $\theta=1$.  

\paragraph{Regret.} The learner's performance is measured via the notion of regret, which is defined as the expected difference between the learner's total reward and the one received by the optimal policy:
\begin{align*}
\RegCB=\E\left[\sum_{t=1}^Tf^\star(x_t,\pi^\star(x_t))- \sum_{t=1}^Tf^\star(x_t,a_t)\right],
\end{align*}
\paragraph{Other notations.} We denote the $(K-1)$-dimensional simplex by $\Delta_K$. Let $\mathbf{1}$ be the all-one vector in an appropriate dimension and $\pi_{\unif}=\frac{1}{K}\cdot\mathbf{1}\in\Delta_{K}$. For a $d$-dimensional vector $v\in\R^d$, we denote its $i$-th entry by $v_i$. $\mathbbm{1}\{\cdot\}$ is the indicator function and $e_i$ is the $i$-th standard basis vector in an appropriate dimension.
\section{Methodology}\label{sec:method}
In this section, we discuss our methodology. In \pref{sec:reward}, we start from introducing how we construct a Lipschitz reward estimator based on uniform samples, which serves as the key for our algorithm construction. We prove that this estimator is an \emph{underestimator} of the reward, and more importantly matches the reward for the optimal policy. Based on this estimator, we design two algorithms for this problem in \pref{sec:off-policy} and \pref{sec:on-policy}, with the first one based on explore-then-exploit and the second one based on inverse-gap weighting (IGW).
\subsection{Reward Estimator Construction via Uniform Exploration}\label{sec:reward}
We first show how we construct a reward estimator based on  feedback collected from a uniform policy, which serves as the most important component in our two algorithms. 
\subsubsection{Inverse Kinematics}\label{sec:ik}
When the reward for each context-action pair is \emph{deterministic} and binary, \citet{maghakian2022personalized} show that if the learner uniformly samples an action for any context and is able to accurately predict the posterior probability of her chosen action given the context and the feedback (inverse kinematics), then she is able to infer that the reward is $1$ if that posterior probability is above certain threshold. Here, we generalize this thresholding reward estimator to the \emph{randomized} binary reward case, and further prove that this estimator correctly models the reward for the optimal policy. To see this, we first prove the following lemma showing the exact posterior distribution over actions if the learner selects an action uniformly randomly. This is also proven in~\citep[Eq.(2)]{maghakian2022personalized} as well, and we include it here for completeness.
\begin{restatable}{lemma}{IK}\label{lem:ik}
    For any context $x\in\calX$, suppose that the learner picks a uniformly random action $a\in[K]$. Let $r$ and $y$ be its realized reward and the corresponding feedback. Then, under \pref{assum:feedback} and \pref{assum:realizability}, the posterior distribution of $a$ given the context $x$ and feedback $y$ equals to
    \begin{align}\label{eqn:ik-main}
        \Pr[a|x,y]=\frac{f^\star(x,a) \cdot\phi^\star(x,y)}{\sum_{a'=1}^Kf^\star(x,a')} + \frac{(1-f^\star(x,a)) (1-\phi^\star(x,y))}{K-\sum_{a'=1}^Kf^\star(x,a')},
    \end{align}
    where $f^\star$ and $\phi^\star$ are the true expected reward and feedback decoder defined in \pref{assum:realizability}.
\end{restatable}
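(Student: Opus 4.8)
The plan is to compute the posterior $\Pr[a \mid x, y]$ directly via Bayes' rule, conditioning on the context $x$ throughout, and to split the event according to the value of the (binary) realized reward $r$, which is the key to exploiting the conditional-independence structure. First I would fix the context $x$ and write, using Bayes' rule,
\begin{align*}
\Pr[a \mid x, y] = \frac{\Pr[y \mid a, x]\,\Pr[a \mid x]}{\Pr[y \mid x]} = \frac{\Pr[y \mid a, x]}{K \cdot \Pr[y \mid x]},
\end{align*}
since the action is drawn uniformly, so $\Pr[a \mid x] = 1/K$ for every $a$. The main work is then to evaluate $\Pr[y \mid a, x]$ and $\Pr[y \mid x]$.

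For $\Pr[y \mid a, x]$, I would introduce the latent reward $r \in \{0,1\}$ and sum: $\Pr[y \mid a, x] = \sum_{r \in \{0,1\}} \Pr[y \mid r, a, x]\,\Pr[r \mid a, x]$. By \pref{assum:feedback}, $y \perp a \mid (r,x)$, so $\Pr[y \mid r, a, x] = \Pr[y \mid r, x]$, which removes the dependence on $a$. By \pref{assum:realizability}, $\Pr[r = 1 \mid a, x] = f^\star(x,a)$ and $\Pr[r = 0 \mid a, x] = 1 - f^\star(x,a)$. This yields
\begin{align*}
\Pr[y \mid a, x] = f^\star(x,a)\,\Pr[y \mid r=1, x] + (1 - f^\star(x,a))\,\Pr[y \mid r=0, x].
\end{align*}
For the normalizer $\Pr[y \mid x]$, I would average this over the uniform action draw: $\Pr[y \mid x] = \frac{1}{K}\sum_{a'=1}^K \Pr[y \mid a', x]$, which by the display above equals $\frac{1}{K}\big(\Pr[y \mid r=1, x]\sum_{a'} f^\star(x,a') + \Pr[y \mid r=0,x](K - \sum_{a'} f^\star(x,a'))\big)$.

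The remaining step is to identify $\Pr[y \mid r, x]$ in terms of $\phi^\star$. Here I would use that $\phi^\star(x,y) = r(x, a)$ is a deterministic decoder (\pref{assum:realizability}): given $(r, x)$, the feedback $y$ is supported only on $\{y : \phi^\star(x,y) = r\}$, so $\Pr[y \mid r=1, x] = \phi^\star(x,y)\,\Pr[y \mid r=1,x]$ and $\Pr[y \mid r=0,x] = (1 - \phi^\star(x,y))\,\Pr[y \mid r=0, x]$ — i.e., the indicator $\phi^\star(x,y)$ (resp. $1 - \phi^\star(x,y)$) acts as a selector. Substituting into the Bayes expression, the factors $\Pr[y \mid r=1,x]$ and $\Pr[y \mid r=0,x]$ cancel between numerator and the corresponding term of the denominator (the cross terms vanish because $\phi^\star(x,y)(1-\phi^\star(x,y)) = 0$), leaving exactly
\begin{align*}
\Pr[a \mid x, y] = \frac{f^\star(x,a)\,\phi^\star(x,y)}{\sum_{a'} f^\star(x,a')} + \frac{(1 - f^\star(x,a))(1 - \phi^\star(x,y))}{K - \sum_{a'} f^\star(x,a')},
\end{align*}
which is \pref{eqn:ik-main}.

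The only real subtlety — and the step I would be most careful about — is the cancellation of the unknown quantities $\Pr[y \mid r, x]$: one must argue that for a fixed $y$ at most one of the two terms survives (because $\phi^\star(x,y)$ is either $0$ or $1$), so that the shared factor in numerator and denominator is genuinely nonzero and can be divided out, and the degenerate case where $\sum_{a'} f^\star(x,a') \in \{0, K\}$ is excluded by \pref{assum:sparse} (which guarantees $0 < \sum_{a'} f^\star(x,a') < K$). Everything else is bookkeeping with Bayes' rule.
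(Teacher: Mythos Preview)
Your proposal is correct and follows essentially the same route as the paper: apply Bayes' rule, condition on the latent reward $r\in\{0,1\}$, use \pref{assum:feedback} to drop the $a$-dependence in $\Pr[y\mid r,a,x]$, and then exploit the decoder property from \pref{assum:realizability} to collapse to the claimed formula. The only cosmetic difference is in the last step: rather than arguing that $\Pr[y\mid r,x]$ is supported on $\{y:\phi^\star(x,y)=r\}$ and cancelling the common factor, the paper applies Bayes once more to write $\Pr[y\mid x,r]=\Pr[r\mid x,y]\Pr[y\mid x]/\Pr[r\mid x]$ and identifies $\Pr[r=1\mid x,y]=\phi^\star(x,y)$ directly.
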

 Now we show how to infer the true reward from this inverse kinematics. Specifically, we show:
\begin{restatable}{lemma}{ikbinary}\label{lem:ikbinary}
    For any context $x\in\calX$ and action $a\in[K]$, let $r(x,a)$ be the realized reward and $y$ be the feedback. Let $h^\star(x,y)\in\Delta_K$ be such that for each $a \in [K]$,
    \begin{align}\label{eqn:ik_h}
        h_a^\star(x,y)\triangleq\frac{f^\star(x,a) \cdot\phi^\star(x,y)}{\sum_{a'=1}^Kf^\star(x,a')} + \frac{(1-f^\star(x,a)) (1-\phi^\star(x,y))}{K-\sum_{a'=1}^Kf^\star(x,a')}.
    \end{align}
    Then we have
    \begin{itemize}
        \item $r(x,a)=\phi^\star(x,y)\geq \mathbbm{1}\{h_a^\star(x,y)\geq \frac{\theta}{\alpha}\}$ for all $a\in[K]$;
        \item $r(x,a)=\phi^\star(x,y)= \mathbbm{1}\{h_a^\star(x,y)\geq \frac{\theta}{\alpha}\}$ for $a=\pi^\star(x)$.
    \end{itemize}
\end{restatable}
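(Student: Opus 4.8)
The plan is to verify both bullets by a direct case analysis on the binary value of $\phi^\star(x,y)$, using only the explicit formula~\eqref{eqn:ik_h} for $h^\star$ together with the two parts of \pref{assum:sparse}; note that \pref{lem:ik} is not even needed, since $h^\star$ is defined here by an explicit formula rather than as a posterior. First I would record the identity $r(x,a)=\phi^\star(x,y)$, which is exactly \pref{assum:realizability} (under \pref{assum:feedback} the conditional law of $y$ given $(x,a,r)$ does not depend on $a$, so this is meaningful for an arbitrary action, not only the one played). Thus both bullets reduce to comparing $\phi^\star(x,y)\in\{0,1\}$ with the indicator $\mathbbm{1}\{h_a^\star(x,y)\geq\frac\theta\alpha\}$.

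For the first bullet, when $\phi^\star(x,y)=1$ the inequality is immediate since the indicator is at most $1$. When $\phi^\star(x,y)=0$, the first summand in~\eqref{eqn:ik_h} vanishes and
\[
h_a^\star(x,y)=\frac{1-f^\star(x,a)}{K-\sum_{a'=1}^K f^\star(x,a')}\;\le\;\frac{1}{K-\alpha}\;<\;\frac{\theta}{\alpha},
\]
where the first inequality uses $1-f^\star(x,a)\le 1$ and the part-1 bound $\sum_{a'}f^\star(x,a')\le\alpha<K$ (so the denominator is positive), and the second is precisely the part-2 hypothesis $\theta>\frac{\alpha}{K-\alpha}$. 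Hence the indicator equals $0=\phi^\star(x,y)$; in particular equality already holds on the event $\{\phi^\star(x,y)=0\}$ for \emph{every} action.

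For the second bullet it then remains to treat the event $\{\phi^\star(x,y)=1\}$ with $a=\pi^\star(x)=\argmax_{a'\in[K]}f^\star(x,a')$. There the second summand in~\eqref{eqn:ik_h} vanishes and
\[
h_{\pi^\star(x)}^\star(x,y)=\frac{f^\star(x,\pi^\star(x))}{\sum_{a'=1}^K f^\star(x,a')}=\frac{\max_{a'\in[K]}f^\star(x,a')}{\sum_{a'=1}^K f^\star(x,a')}\;\ge\;\frac{\theta}{\alpha},
\]
using part 2 ($\max_{a'}f^\star(x,a')\ge\theta>0$, which also makes the denominator positive) in the numerator and part 1 ($\sum_{a'}f^\star(x,a')\le\alpha$) in the denominator. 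Thus the indicator equals $1=\phi^\star(x,y)$, which together with the previous paragraph gives the claimed equality for $a=\pi^\star(x)$.

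As for the main obstacle: there is essentially none of substance — the argument is a two-line computation per case. The only points demanding care are (i) checking that the denominators $K-\sum_{a'}f^\star(x,a')$ and $\sum_{a'}f^\star(x,a')$ are strictly positive, which is guaranteed by $\alpha<K/2$ and $\theta>0$ respectively, and (ii) getting the direction of the threshold comparison right, which is exactly why the value $\frac\theta\alpha$ and the hypothesis $\theta>\frac{\alpha}{K-\alpha}$ are chosen. (If one later wants to work with estimated $\wh f,\wh\phi$ instead of the exact quantities, this clean separation would have to be padded with a robustness margin, but for the exact objects the statement is tight as written.)
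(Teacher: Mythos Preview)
Your proposal is correct and follows essentially the same argument as the paper's proof: both establish the first bullet by showing that $\phi^\star(x,y)=0$ forces $h_a^\star(x,y)\le\frac{1}{K-\alpha}<\frac{\theta}{\alpha}$ via \pref{assum:sparse}, and both establish the second bullet by showing that $\phi^\star(x,y)=1$ with $a=\pi^\star(x)$ gives $h_a^\star(x,y)=\frac{f^\star(x,\pi^\star(x))}{\sum_{a'}f^\star(x,a')}\ge\frac{\theta}{\alpha}$. Your case analysis is slightly more explicit than the paper's contradiction phrasing, but the content is identical.
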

\begin{proof}
    Note that since $\phi^\star(x,y)\in\{0,1\}$, only one term can be non-zero in \pref{eqn:ik_h}. If $h_a^\star(x,y)\geq \frac{\theta}{\alpha}$, where $\theta$ and $\alpha$ are defined in \pref{assum:sparse}, then the reward $\phi^\star(x,y)$ has to be $1$ since otherwise, we have $\phi^\star(x,y)=0$ and
\begin{align*}
    \frac{\theta}{\alpha}\leq h_a^\star(x,y) = \frac{1-f^\star(x,a)}{K-\sum_{a'=1}^Kf^\star(x,a')}\leq \frac{1}{K-\alpha}<\frac{\theta}{\alpha},
\end{align*}
where the second and the third inequality are due to \pref{assum:sparse}. This leads to a contradiction. Therefore, we know that the realized reward is $1$ if $h_a^\star(x,y)$ is no less than $\frac{\theta}{\alpha}$. Therefore, $\mathbbm{1}\{h_a^\star(x,y)\geq \frac{\theta}{\alpha}\}$ can be viewed as an underestimator of $r(x,a)$. To prove the second property, consider the case in which $a=\pi^\star(x)$. Then, we know that when $\phi^\star(x,y)=1$, we must also have $h_a^\star(x,y)\geq\frac{\theta}{\alpha}$ since
\begin{align*}
    h_a^\star(x,y)=\frac{f^\star(x,\pi^\star(x))}{\sum_{i=1}^Kf^\star(x,a)}\geq \frac{f^\star(x,\pi^\star(x))}{\alpha}\geq \frac{\theta}{\alpha},
\end{align*}
where the first inequality uses the first property in \pref{assum:sparse} and the second inequality uses the second property in \pref{assum:sparse}. 
\end{proof}
\pref{lem:ikbinary} shows that the function $h^\star(x,y)$ defined in \pref{eqn:ik_h} satisfies two important properties. First, $\mathbbm{1}\{h_a^\star(x,y)\geq\frac{\theta}{\alpha}\}$ serves as a reward \emph{underestimator} for all the policies. Second, it matches the reward of the optimal policy. Note that this is different from~\citep[Eq.(3)]{maghakian2022personalized}, since they only consider the \emph{deterministic} binary reward for each context-action pair, and they do not show that the constructed reward estimator matches the reward of the optimal policy. 

Based on these two properties, we know that if we have access to $h^\star$, then the policy $\pi$ that maximizes the surrogate reward $\E\left[\mathbbm{1}\{h_{\pi(x)}^\star(x,y)\}\geq\frac{\theta}{\alpha}\right]$ also maximizes the true expected reward.

\subsubsection{Learning the Posterior Distribution via ERM}\label{sec:erm}
Next, we show how to learn this $h^\star$ via uniformly collected samples. Define the function class $\calH$ as:
\begin{align*}
    \calH=\bigg\{h:\calX\times\calY\mapsto\Delta_K,h_a(x,y)&=\frac{f(x,a)\phi(x,y)}{\sum_{i=1}^Kf(x,i)}+\frac{(1-f(x,a))(1-\phi(x,y))}{K-\sum_{i=1}^Kf(x,i)} \\
    &\qquad\qquad\qquad\qquad\qquad\qquad \Big\vert~f\in\calF, \phi\in\Phi, a\in[K]\bigg\}.
\end{align*}
Note that $|\calH|=|\calF|\cdot |\Phi|$.
Since $h^\star$ models the posterior distribution over actions when the learner selects her action uniformly randomly, we collect $N$ tuples of $(x_t,a_t,y_t)$ by sampling $a_t$ uniformly from $[K]$ and find the empirical risk minimizer (ERM) $\wh{h}\in \calH$ over these samples using squared loss. In the following lemma, we show that $\wh{h}$ enjoys $\order\left(\frac{\log|\calH|}{N}\right)$ excess risk with high probability.
\begin{restatable}{lemma}{erm}\label{lem:erm}
    Let $\{(x_i,a_i,y_i)\}_{i=1}^N$ be $N$ i.i.d. samples where $x_i\in\calD$, $a\in\pi_{\unif}$, and $y_i$ is the corresponding feedback. Let $\wh{h}$ be the ERM with respect to the squared loss defined as follows:
    \begin{align}\label{eq:erm_h}
        \wh{h} =\argmin_{h\in\calH}\left\{\sum_{i=1}^N\left\|h(x_i,y_i)-e_{a_i}\right\|_2^2\right\}.       
    \end{align}
    Then, with probability at least $1-\delta$, we have
\begin{align}
    \E_{x \sim \calD,a \sim \pi_{\unif}, y \mid x,a }\left[\|\wh{h}(x,y)-e_a\|_2^2 -
    \|h^\star(x,y)-e_a\|_2^2 \right] \leq \order\left(\frac{\log\frac{|\Hcal|}{\delta}}{N}\right),\label{eqn:ik_off_policy}\\
    \E_{x\sim\calD,a'\sim\pi_{\unif},y|x,a'}\left[\|\wh{h}(x,y)-h^\star(x,y)\|_2\right]\leq \order\left(\sqrt{\frac{\log\frac{|\calH|}{\delta}}{N}}\right).\label{eqn:ik_on_policy}
\end{align}
\end{restatable}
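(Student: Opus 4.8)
The plan is to establish both bounds via a standard ERM/uniform-convergence argument for squared-loss regression with a finite hypothesis class, tailored to the vector-valued predictor $h:\calX\times\calY\to\Delta_K$. Note that the sampling distribution is fixed throughout: $x\sim\calD$, $a'\sim\pi_{\unif}$ independent of $x$, and $y\mid x,a'$ from the environment; call this joint law $\calP$. By \pref{lem:ik}, the conditional mean of the label $e_a$ given $(x,y)$ under $\calP$ is exactly $h^\star(x,y)$, so $h^\star$ is the Bayes-optimal predictor under squared loss. This is the structural fact that makes everything go through.

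For \pref{eqn:ik_off_policy}, first I would record the bias-variance/orthogonality identity: for any fixed $h\in\calH$,
\begin{align*}
\E_{\calP}\!\left[\|h(x,y)-e_a\|_2^2 - \|h^\star(x,y)-e_a\|_2^2\right] = \E_{(x,y)}\!\left[\|h(x,y)-h^\star(x,y)\|_2^2\right],
\end{align*}
which follows by conditioning on $(x,y)$, expanding the squares, and using $\E[e_a\mid x,y]=h^\star(x,y)$ to kill the cross term. Thus the left-hand side of \pref{eqn:ik_off_policy} equals the population $L^2$ distance $\|\wh h - h^\star\|_{L^2(\calP)}^2$, and it suffices to control this. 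The excess empirical risk $\frac1N\sum_i(\|h(x_i,y_i)-e_{a_i}\|^2-\|h^\star(x_i,y_i)-e_{a_i}\|^2)$ is a sum of i.i.d. bounded random variables (each in $[-2,2]$ since all vectors live in a bounded set) whose mean is $\|h-h^\star\|_{L^2(\calP)}^2$, and — crucially — whose conditional variance is bounded by a constant times that same mean (this is the self-bounding property: $\Var \lesssim \E[\|h-h^\star\|^2 \cdot \|(\text{stuff})\|^2]\lesssim \|h-h^\star\|_{L^2}^2$). Applying Bernstein's inequality to each of the $|\calH|$ hypotheses and union-bounding gives: with probability $\ge 1-\delta$, simultaneously for all $h\in\calH$, the empirical excess risk is at least $\tfrac12\|h-h^\star\|_{L^2(\calP)}^2 - O(\log(|\calH|/\delta)/N)$. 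Since $\wh h$ minimizes empirical risk over $\calH$ and $h^\star\in\calH$ (so its empirical excess risk is $0$, or we use the same concentration in the other direction), the empirical excess risk of $\wh h$ is $\le 0$, and rearranging yields $\|\wh h-h^\star\|_{L^2(\calP)}^2 = O(\log(|\calH|/\delta)/N)$, which is \pref{eqn:ik_off_policy}.

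For \pref{eqn:ik_on_policy}, I would simply deduce it from the bound just obtained via Jensen's inequality: $\E_{\calP}\|\wh h - h^\star\|_2 \le \sqrt{\E_{\calP}\|\wh h - h^\star\|_2^2} = \|\wh h - h^\star\|_{L^2(\calP)} = O(\sqrt{\log(|\calH|/\delta)/N})$. Here I use that the marginal over $(x,y)$ appearing inside the expectation in \pref{eqn:ik_on_policy} is the same $\calP$-marginal as before (both draw $x\sim\calD$, $a'\sim\pi_{\unif}$, $y\mid x,a'$), and that $\|\wh h - h^\star\|_2$ does not depend on the dummy action label.

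The main obstacle, and the only place that requires care beyond textbook bookkeeping, is the self-bounding variance step: one must verify that the per-sample excess-loss random variable $Z_h := \|h(x,y)-e_a\|_2^2 - \|h^\star(x,y)-e_a\|_2^2$ satisfies $\Var(Z_h) \le C\,\E[Z_h]$ for an absolute constant $C$. This uses $Z_h = \|h-h^\star\|_2^2 + 2\langle h^\star - e_a,\, h-h^\star\rangle$ pointwise, boundedness of $h,h^\star,e_a$ in $\ell_2$ (all have norm $O(1)$), and again $\E[e_a\mid x,y]=h^\star(x,y)$ to see $\E[Z_h] = \E\|h-h^\star\|_2^2$ while $\E[Z_h^2] \lesssim \E\|h-h^\star\|_2^2$. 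I would also note that the finiteness of $\calH$ (with $|\calH|=|\calF||\Phi|$) is what lets the union bound go through cleanly; for infinite classes one would replace $\log|\calH|$ by an appropriate complexity measure, as the paper remarks.
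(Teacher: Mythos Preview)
Your proof is correct and rests on the same structural facts as the paper's: the Bayes-optimality identity $\E[e_a\mid x,y]=h^\star(x,y)$ from \pref{lem:ik}, boundedness, and a self-bounding variance property of the excess loss, followed by Jensen for \pref{eqn:ik_on_policy}. The route, however, is packaged differently. The paper verifies that $\ell_h(Z)=\|h(x,y)-e_a\|_2^2$ satisfies the \emph{strong $\eta$-central condition} of \citet{van2015fast} (with $\eta=\tfrac12$), by showing that the cross term $(h-h^\star)^\top(h^\star-e_a)$ is conditionally sub-Gaussian with parameter $\|h-h^\star\|_2^2$, and then invokes their Theorem~7.6 as a black box to obtain the fast rate. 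You instead work directly with Bernstein's inequality and the variance bound $\Var(Z_h)\lesssim\E[Z_h]$, which is the moment-level analogue of the central condition. Your argument is more elementary and self-contained; the paper's is shorter once the external reference is granted and generalizes more readily to infinite classes via their Theorem~7.7. Both exploit exactly the same orthogonality and self-bounding structure, so the difference is one of presentation rather than substance.
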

The full proof is deferred to \pref{app:reward}. Different from the classic one-dimensional squared loss, here we consider the $\ell_2$-loss between two vectors in $\Delta_K$. Directly applying the generalization bound for each entry leads to a $K$-factor worse bound. Instead, our proof is based on the observation that the loss function $\|h(x,y)-e_a\|_2^2$, when seen as a function of $h$, satisfies the so-called strong 1-central condition~\citep[Definition 7]{van2015fast}. Moreover, these results can be extended to function classes with infinite size and bounded covering number based on Theorem 7.7 of~\citep{van2015fast}.

\subsubsection{Constructing Lipschitz Reward Estimators Based on $\wh{h}$}
Now we show how to construct a reward estimator based on the ERM $\wh{h}$. According to \pref{sec:ik}, an intuitive form of the reward (under)estimator is $\mathbbm{1}\{\wh{h}_a(x,y)\geq\frac{\theta}{\alpha}\}$. However, since the indicator function is not Lipschitz, $\mathbbm{1}\{\wh{h}_a(x,y)\geq\frac{\theta}{\alpha}\}$ can be very different from $\mathbbm{1}\{{h}_a^\star(x,y)\geq\frac{\theta}{\alpha}\}$ even with the generalization bound proven in \pref{lem:erm}. To resolve this issue, we propose a Lipschitz variant of the indicator function (defined in \pref{eqn:reward_underestimator}) and show that it also satisfies the two properties shown in \pref{lem:ikbinary}.  
\begin{restatable}{lemma}{underest}\label{lem:underest}
    Define $G(v,\beta,\sigma)$ as
    \begin{align}\label{eqn:reward_underestimator}
        G(v,\beta,\sigma) = \frac{1}{\sigma}(v-\beta)\mathbbm{1}\{\beta\leq v< \beta+\sigma\} + \mathbbm{1}\{v\geq \beta+\sigma\}.
    \end{align}
    Then, for any context $x\in\calX$, action $a\in[K]$, and feedback $y\in\calY$ generated via context $x$ and the realized reward $r(x,a)$, we have the following two properties with $\sigma\triangleq\frac{1}{2}\left(\frac{\theta}{\alpha}-\frac{1}{K-\alpha}\right)>0$.
    \begin{itemize}
        \item $r(x,a)=\phi^\star(x,y)\geq G(h_a^\star(x,y),\frac{\theta}{\alpha}-\sigma,\sigma)$,
        \item $r(x,a)=\phi^\star(x,y)= G(h_{a}^\star(x,y),\frac{\theta}{\alpha}-\sigma,\sigma)$ if $a=\pi^\star(x)$,        
    \end{itemize}
\end{restatable}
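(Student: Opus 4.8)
The plan is to mirror the two-case argument already used in the proof of \pref{lem:ikbinary}, now keeping careful track of the piecewise-linear shape of $G(\cdot,\beta,\sigma)$ with $\beta=\frac\theta\alpha-\sigma$. First I would record the elementary facts about $G$ from \pref{eqn:reward_underestimator}: it is nondecreasing in its first argument, takes values in $[0,1]$, equals $0$ on $(-\infty,\beta]$, rises linearly from $0$ to $1$ on $[\beta,\beta+\sigma]$, and equals $1$ on $[\beta+\sigma,\infty)$. In particular $\beta+\sigma=\frac\theta\alpha$ and $\beta=\frac12\!\left(\frac\theta\alpha+\frac1{K-\alpha}\right)$, i.e. $\beta$ is exactly the midpoint between $\frac1{K-\alpha}$ and $\frac\theta\alpha$. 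I would also observe that $\sigma>0$ is precisely the second inequality of \pref{assum:sparse} (since $\theta>\frac\alpha{K-\alpha}$ is equivalent to $\frac\theta\alpha>\frac1{K-\alpha}$), so the construction is well defined and $\frac1{K-\alpha}<\beta<\frac\theta\alpha$.

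Next, exactly as in \pref{lem:ikbinary}, since $\phi^\star(x,y)\in\{0,1\}$ at most one of the two terms in \pref{eqn:ik_h} is nonzero. In the case $\phi^\star(x,y)=0$ I would bound $h_a^\star(x,y)=\frac{1-f^\star(x,a)}{K-\sum_{a'}f^\star(x,a')}\le\frac1{K-\alpha}<\beta$ using \pref{assum:sparse}; since $G$ vanishes on $(-\infty,\beta]$ this gives $G(h_a^\star(x,y),\beta,\sigma)=0=\phi^\star(x,y)$, which establishes simultaneously the underestimation inequality $\phi^\star(x,y)\ge G(\cdots)$ and the equality claim in this subcase. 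In the case $\phi^\star(x,y)=1$, the inequality $1=\phi^\star(x,y)\ge G(\cdots)$ is immediate from $G\le 1$, so only the equality for $a=\pi^\star(x)$ remains.

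For that remaining piece I would reuse the computation from \pref{lem:ikbinary}: when $a=\pi^\star(x)$ and $\phi^\star(x,y)=1$ we have $h_a^\star(x,y)=\frac{f^\star(x,\pi^\star(x))}{\sum_{a'}f^\star(x,a')}\ge\frac{\theta}{\alpha}=\beta+\sigma$, the numerator being at least $\theta$ by the second part of \pref{assum:sparse} and the denominator at most $\alpha$ by the first part. Since $G$ saturates at $1$ on $[\beta+\sigma,\infty)$, this yields $G(h_{\pi^\star(x)}^\star(x,y),\beta,\sigma)=1=\phi^\star(x,y)$, finishing the proof. There is no genuine obstacle; the only delicate point is the calibration of $\sigma$: it must be small enough that $\beta=\frac\theta\alpha-\sigma$ still exceeds $\frac1{K-\alpha}$ (so the $\phi^\star=0$ branch lands in the flat zero region of $G$) while keeping $\beta+\sigma=\frac\theta\alpha$ (so the $\phi^\star=1$, $a=\pi^\star$ branch lands in the flat one region), and $\sigma=\frac12\!\left(\frac\theta\alpha-\frac1{K-\alpha}\right)$ is exactly the symmetric choice achieving both.
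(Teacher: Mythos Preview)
Your proposal is correct and follows essentially the same argument as the paper: both split on the value of $\phi^\star(x,y)$ (the paper phrases the first property via the contrapositive ``$h_a^\star\ge\beta\Rightarrow\phi^\star=1$'', which is exactly your bound $\phi^\star=0\Rightarrow h_a^\star\le\frac1{K-\alpha}<\beta$), and both use the same computation $h_{\pi^\star(x)}^\star\ge\theta/\alpha=\beta+\sigma$ for the equality case. Your added remarks on the calibration of $\sigma$ are accurate and make the role of the midpoint choice more explicit than the paper does.
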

The proof shares a similar spirit to \pref{lem:ikbinary} and is deferred to~\pref{app:reward}. Notably, the function $G(v,\beta,\sigma)$ is $\frac{1}{\sigma}$-Lipschitz in $v$, which is important in order to show concentration between the reward estimator with respect to the true posterior distribution $G(h_a^\star(x,y),\frac{\theta}{\alpha}-\sigma,\sigma)$ and that constructed via the ERM function $\wh{h}$: $G(\wh{h}_a(x,y),\frac{\theta}{\alpha}-\sigma,\sigma)$. In the following, we will show how to use the reward estimator $G(\wh{h}_a(x,y),\frac{\theta}{\alpha}-\sigma,\sigma)$ to design algorithms with provable guarantees.

\subsection{Off-Policy Algorithm}\label{sec:off-policy}
\begin{algorithm}[t]
\caption{Off-Policy IGL}\label{alg:off_IGL}
Input: number of exploration samples $N$, parameters $\alpha,\theta$ and $\sigma= \frac{1}{2}\left(\frac{\theta}{\alpha}-\frac{1}{K-\alpha}\right)$.\\
\For{$t = 1, 2, \cdots , 2N$}{

    Receive context $x_t$, sample $a_t\sim \pi_{\unif}$, and observe signal $y_t$.
}
Calculate the empirical risk minimizer $\wh{h}$ as in \pref{eq:erm_h}.

Construct the reward decoder $\wh{r}_{\sigma}(x,y,a)=G(\wh{h}_a(x,y),\frac{\theta}{\alpha}-\sigma, \sigma)$ where $G$ is defined in \pref{eqn:reward_underestimator}.

Calculate $\wh{\pi}=\argmax_{\pi\in\Pi}\left\{\sum_{i=N+1}^{2N}\pi(a_i|x_i)\cdot\wh{r}_{\sigma}(x_i,y_i,a_i)\right\}$ where $\Pi=\{\pi_f:f \in \Fcal\}$.

\For{$t=2N+1,\dots,T$}{
    Execute policy $\wh{\pi}$.
}
\end{algorithm}
Built on the reward estimator in \pref{sec:reward}, we present our off-policy algorithm, summarized in \pref{alg:off_IGL}. Our algorithm follows the explore-then-exploit idea and consists of two phases. In the exploration phase, we perform uniform exploration and  collect the dataset $\{x_i,a_i,y_i\}_{i=1}^{2N}$. The first $N$ samples are used to learn the reward estimator $\wh{r}$ and the rest $N$ samples are used to learn the policy $\wh{\pi}$ with $\wh{r}$. For the exploitation phase, we employ the learned policy $\wh{\pi}$ in the remaining $T-2N$ iterations.
We present the regret bound of \pref{alg:off_IGL} in the following theorem.
\begin{restatable}{theorem}{Offline}\label{thm:reg_off}
Under Assumptions \ref{assum:feedback}-\ref{assum:sparse}, \pref{alg:off_IGL} with $N=T^{2/3}K^{2/3}{\sigma}^{-2/3}\log^{1/3}(|\calH|T)$ guarantees that $\RegCB \le \Ocal\pa{T^{2/3}K^{2/3}{\sigma}^{-2/3}\log^{1/3}(|\calH|T)}$.
\end{restatable}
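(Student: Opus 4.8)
The plan is to decompose the regret into three pieces: (i) the cost of the $2N$ exploration rounds, (ii) the suboptimality of the learned policy $\wh\pi$ measured against $\pi^\star$ under the \emph{true} reward, which I will route through the surrogate reward $\wh r_\sigma$, and (iii) the exploitation of $\wh\pi$ over the remaining $T-2N$ rounds. Step (i) is bounded trivially by $2N$ since rewards lie in $[0,1]$. The bulk of the work is step (ii). Write $R(\pi)\triangleq\E_{x\sim\calD,a\sim\pi(\cdot|x),y|x,a}[\,\phi^\star(x,y)\,]=\E_{x}[f^\star(x,\pi(x))]$ for the true value of a policy, and $\wh R_\sigma(\pi)\triangleq\E_{x\sim\calD,a\sim\pi(\cdot|x),y|x,a}[\,\wh r_\sigma(x,y,a)\,]$ and $R_\sigma(\pi)$ for the analogous quantity using the population estimator $G(h^\star_a(x,y),\tfrac\theta\alpha-\sigma,\sigma)$. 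By \pref{lem:underest}, for every policy $\pi$ we have $R_\sigma(\pi)\le R(\pi)$ (underestimation) and, crucially, $R_\sigma(\pi^\star)=R(\pi^\star)$ (tightness at the optimum, using that $\pi^\star(x)=\pi_{f^\star}(x)$ takes a single deterministic action so the expectation over $a\sim\pi^\star$ collapses).

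For step (ii) I would chain the following inequalities. Since $\wh\pi$ maximizes the \emph{empirical} surrogate objective over the second batch of $N$ samples, a uniform-convergence argument over the finite policy class $\Pi=\{\pi_f:f\in\calF\}$ (size $|\calF|\le|\calH|$) gives that, with high probability, $\wh R_\sigma(\wh\pi)\ge \wh R_\sigma(\pi)-\order(\sqrt{\log(|\calH|/\delta)/N})$ for all $\pi$, in particular for $\pi^\star$; here the importance-weighting factor $\pi(a_i|x_i)$ against a uniform logging policy inflates the range/variance by at most $K$, costing a $\sqrt K$ (or $K$) factor — this is exactly where the $K$-dependence in $N$ enters. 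Next I need to pass from the population estimator $R_\sigma$ to the ERM-based estimator $\wh R_\sigma$: for any fixed $\pi$, $|\wh R_\sigma(\pi)-R_\sigma(\pi)|\le \E_{x,a\sim\pi,y}|G(\wh h_a(x,y),\cdot,\sigma)-G(h^\star_a(x,y),\cdot,\sigma)|\le \tfrac1\sigma\E|\wh h_a(x,y)-h^\star_a(x,y)|$ because $G(\cdot,\beta,\sigma)$ is $\tfrac1\sigma$-Lipschitz; importance-weighting to the uniform logging distribution and invoking \pref{eqn:ik_on_policy} from \pref{lem:erm} bounds this by $\tfrac{K}{\sigma}\cdot\order(\sqrt{\log(|\calH|/\delta)/N})$ (the $K$ again from the change of measure). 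Combining:
\begin{align*}
R(\wh\pi)\ge R_\sigma(\wh\pi)\ge \wh R_\sigma(\wh\pi)-\tfrac K\sigma\epsilon_N \ge \wh R_\sigma(\pi^\star)-\order(\epsilon_N)-\tfrac K\sigma\epsilon_N\ge R_\sigma(\pi^\star)-\order(\tfrac K\sigma\epsilon_N)= R(\pi^\star)-\order(\tfrac K\sigma\epsilon_N),
\end{align*}
where $\epsilon_N=\sqrt{\log(|\calH|T)/N}$. Hence the per-round suboptimality of $\wh\pi$ is $\order\!\big(\tfrac K\sigma\sqrt{\log(|\calH|T)/N}\big)$, and step (iii) contributes $(T-2N)$ times this.

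Putting the three pieces together, $\RegCB\le 2N + \order\!\big(T\cdot \tfrac K\sigma\sqrt{\tfrac{\log(|\calH|T)}{N}}\big)$. Optimizing over $N$ — balancing $N$ against $T K \sigma^{-1}\sqrt{\log(|\calH|T)/N}$ — yields $N\asymp T^{2/3}K^{2/3}\sigma^{-2/3}\log^{1/3}(|\calH|T)$ and the stated bound $\RegCB\le\order\!\big(T^{2/3}K^{2/3}\sigma^{-2/3}\log^{1/3}(|\calH|T)\big)$; choosing $\delta=1/T$ folds the high-probability failure event into the expectation at a negligible additive $\order(1)$ cost. The main obstacle I anticipate is handling the two change-of-measure steps cleanly: the importance weights $\pi(a_i|x_i)/\pi_{\unif}(a_i|x_i)$ are bounded by $K$, but one must be careful that the concentration for $\wh\pi$ holds uniformly over $\Pi$ (not just for a fixed policy) and that the Lipschitz-transfer bound is applied to $\wh h$, which is itself data-dependent — so \pref{eqn:ik_on_policy} must already be a bound that holds for the ERM $\wh h$ (which it is), and the uniform bound over $\Pi$ must be a separate Bernstein/Freedman argument over the second, independent batch of samples. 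A secondary subtlety is verifying that the tightness $R_\sigma(\pi^\star)=R(\pi^\star)$ genuinely needs only the population-level identity from \pref{lem:underest} and not any property of $\wh h$, which it does, so this step is clean.
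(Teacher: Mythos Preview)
Your proposal is correct and follows essentially the same route as the paper's proof: both decompose the regret into the $2N$ exploration cost plus $(T-2N)$ times the suboptimality $V(\pi^\star)-V(\wh\pi)$, and both control the latter by chaining (i) underestimation $V(\pi)\ge V(\pi,r_\sigma^\star)$ and tightness at $\pi^\star$ from \pref{lem:underest}, (ii) a $\tfrac1\sigma$-Lipschitz transfer between $r_\sigma^\star$ and $\wh r_\sigma$ via \pref{lem:erm}, and (iii) a Hoeffding/union bound over $\Pi$ for the second batch, each step costing $\order\!\big(\tfrac{K}{\sigma}\sqrt{\log(|\calH|/\delta)/N}\big)$, before optimizing $N$. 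The only cosmetic difference is that the paper performs the Lipschitz transfer at the empirical level (requiring one extra Hoeffding, \pref{eqn:hoeffding_h}, to push the population bound on $\|\wh h-h^\star\|_2$ down to the second-batch average), whereas you do it at the population level and instead condition the Hoeffding step on the first batch so that $\wh h$ is fixed; both orderings are valid and yield the identical final bound.
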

We remark that this is the first provably efficient algorithm for IGL with personalized reward. The key of the proof is to show that the learned policy $\wh{\pi}$ is near-optimal under the true reward. This is achieved by using the properties of function $G$ in \pref{lem:underest} and proving that the reward decoder $\wh{r}_{\sigma}$ is close to the ground-truth. The full proof is deferred to \pref{app:off-policy}. Besides the dependence on $T$, $K$ and $\log |\Hcal|$, our regret bound also depends on $\sigma^{-1}$, which measures how sparse the reward vector is and characterizes the difficulty of the problem. For example, $\sigma^{-1}=\Ocal(1)$ in the  multi-class classification problem and $\sigma^{-1} \le \Ocal(s^2)$ in the $s$-multi-label classification problem. 

\subsection{On-Policy Algorithm}\label{sec:on-policy}
Since on-policy algorithms are more favorable in practice,
we further introduce an on-policy algorithm based on the inverse-gap weighting strategy. Following \citep{foster2020beyond}, we assume access to an online regression oracle $\AlgSq$: at each round $t\in[T]$, the oracle $\AlgSq$ produces an estimator $\wh{f}_t$ in the convex hull of $\calF$, then receives a context-action-reward tuple $(x_t,a_t,r_t)$. The squared loss of the oracle for this round is defined as $(\wh{f}_t(x_t,a_t)-r_t)^2$, which is on average assumed to be close to that of the best predictor in $\calF$.
\begin{assumption}[Bounded squared loss regret]
\label{asm:regression_oracle}
For any sequence $\{(x_t, a_t, r_t)\}_{t=1}^{T}$, 
the regression oracle \AlgSq guarantees the following regret bound for some $\RegSq$ that depends on $T$, $K$, and $\calF$:
\begin{equation*}
\sum_{t=1}^T \pa{\fhat_t(x_t, a_t) - r_{t}}^2
                   -\inf_{f \in \cF}\sum_{t=1}^T \pa{f(x_t, a_t) - r_{t}}^2 \\
  \leq \RegSq.\notag
\end{equation*}
\end{assumption}
Based on the ground-truth inverse kinematics function $h^\star$, we define a function $\fl(x,a):=\E_{y|x,a}\left[G(h_a^\star(x,y),\frac{\theta}{\alpha}-\sigma,\sigma)\right]$, which is always a lower bound on $f^\star(x,a)$ according to \pref{lem:underest}. 
Since we only feed the surrogate reward to the oracle, we make another mild assumption that our regression function class $\Fcal$ also realizes $\fl$.
\begin{assumption}[Lower Bound Realizability]\label{assum:lower}
We also assume that $\fl\in\calF$, where $\fl$ is defined as $\fl(x,a)=\E_{y|x,a}\left[G(h_a^\star(x,y),\frac{\theta}{\alpha}-\sigma,\sigma)\right]$
\end{assumption}
We now summarize our algorithm in \pref{alg:on_IGL}. After obtaining the inverse kinematics predictor $\wh{h}$ in the same manner as \pref{alg:off_IGL}, instead of uniform exploring, we use the estimated reward from the oracle and an inverse-gap weighting strategy~\citep{abe1999associative, foster2020beyond} defined in \pref{eq:igw}. Different from the contextual bandit problem where the true reward is given and fed to the oracle, we feed the predicted reward $G(\wh{h}_{a_t}(x_t,y_t),\frac{\theta}{\alpha}-\sigma,\sigma)$ to the oracle. 

One might wonder how such misspecification in rewards would affect the regret bound. Our key observation is that since we use the uniform policy to collect data used to train $\wh{h}$, the generalization error of $\wh{h}$ is small for any $a$ due to good coverage of the dataset on each action. Based on this observation, we prove the following theorem for \pref{alg:on_IGL}.
\begin{restatable}{theorem}{Online}\label{thm:online}
Under Assumptions \ref{assum:feedback}-\ref{assum:lower}, \pref{alg:on_IGL} with certain choice of $N$ and $\gamma$ guarantees that $\RegCB=\order\pa{\sqrt{KT\RegSq}+\sigma^{-2/3}(KT)^{2/3}\log^{1/3}(|\calH|T)}$.
\end{restatable}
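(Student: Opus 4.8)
The plan is to reduce the true contextual‑bandit regret to a ``surrogate regret'' measured under the lower bound $\fl$ from \pref{assum:lower}, and then run the inverse‑gap‑weighting (\squareCB) analysis of \citet{foster2020beyond}, treating the estimated reward $\wh{r}_\sigma(x_t,y_t,a_t)=G(\wh{h}_{a_t}(x_t,y_t),\tfrac{\theta}{\alpha}-\sigma,\sigma)$ that is fed to the oracle as a biased reward channel whose bias is controlled by the off‑policy generalization bound \eqref{eqn:ik_off_policy}. To begin, I would note the $N$ uniform‑exploration rounds contribute at most $N$ to the regret, and for a round $t$ in the IGW phase \pref{lem:underest} gives (after taking $\E_{y\mid x_t,\cdot}$ of its two bullets) $f^\star(x_t,\pi^\star(x_t))=\fl(x_t,\pi^\star(x_t))$ and $f^\star(x_t,a_t)\ge\fl(x_t,a_t)$, so pointwise $f^\star(x_t,\pi^\star(x_t))-f^\star(x_t,a_t)\le\fl(x_t,\pi_{\fl}(x_t))-\fl(x_t,a_t)$. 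Hence $\RegCB\le N+\sum_{t=N+1}^{T}\E[\fl(x_t,\pi_{\fl}(x_t))-\fl(x_t,a_t)]$, and by \pref{assum:lower} the function $\fl$ lies in $\calF$, so it is a legitimate comparator for the regression oracle.

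Next I would apply the standard inverse‑gap‑weighting lemma with the comparator function $\fl(x_t,\cdot)$, which yields $\sum_{t>N}\E[\fl(x_t,\pi_{\fl}(x_t))-\fl(x_t,a_t)]\le O(KT/\gamma)+\tfrac{\gamma}{4}A$ with $A:=\sum_{t>N}\E[(\wh{f}_t(x_t,a_t)-\fl(x_t,a_t))^2]$. To bound $A$, let $g(x,a):=\E_{y\mid x,a}[\wh{r}_\sigma(x,y,a)]$ be the conditional mean of the reward actually fed to the oracle --- a fixed $[0,1]$‑valued function once $\wh{h}$ is frozen after exploration. Using the oracle guarantee (\pref{asm:regression_oracle}) with comparator $\fl$, the identity $(\wh{f}_t-\fl)^2=(\wh{f}_t-r_t)^2-(\fl-r_t)^2-2(\wh{f}_t-\fl)(\fl-r_t)$, the fact that $\E[r_t\mid x_t,a_t]=g(x_t,a_t)$, and Cauchy--Schwarz, I get $A\le\RegSq+2\sqrt{A\,\calE_N}$ and therefore $A=O(\RegSq+\calE_N)$, where $\calE_N:=\sum_{t>N}\E[(g(x_t,a_t)-\fl(x_t,a_t))^2]$ is the cumulative bias of the surrogate reward.

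The crux is to bound $\calE_N$, and this is where the Lipschitzness from \pref{lem:underest} and the uniform coverage of the exploration data come in. Since $G(\cdot,\beta,\sigma)$ is $\tfrac1\sigma$‑Lipschitz, $\abs{g(x,a)-\fl(x,a)}\le\tfrac1\sigma\E_{y\mid x,a}\abs{\wh{h}_a(x,y)-h^\star_a(x,y)}$, and by Jensen $(g(x,a)-\fl(x,a))^2\le\tfrac1{\sigma^2}\E_{y\mid x,a}\norm{\wh{h}(x,y)-h^\star(x,y)}_2^2$. Because $p_t(a)\le 1=K\pi_{\unif}(a)$, summing this quantity over the action actually played under $p_t$ costs only a single factor $K$ relative to $\pi_{\unif}$; combining with the observation that $h^\star$ is the Bayes‑optimal predictor (so the excess risk in \eqref{eqn:ik_off_policy} equals $\E\norm{\wh{h}-h^\star}_2^2$) gives, on an event of probability at least $1-\delta$, $\calE_N\le O\!\pa{\tfrac{KT\log(\abs{\calH}/\delta)}{\sigma^2 N}}$; the complementary event contributes at most $\delta T$, which is absorbed by taking $\delta=1/T$.

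Putting the pieces together gives $\RegCB\le N+O(KT/\gamma)+O\!\pa{\gamma(\RegSq+\calE_N)}$; choosing $\gamma=\sqrt{KT/(\RegSq+\calE_N)}$ makes the last two terms $O\!\pa{\sqrt{KT\RegSq}+\sqrt{KT\calE_N}}$, and since $\sqrt{KT\calE_N}=O\!\pa{KT\sqrt{\log(\abs{\calH}T)}/(\sigma\sqrt{N})}$, balancing this against $N$ forces $N=\Theta\!\pa{(KT)^{2/3}\sigma^{-2/3}\log^{1/3}(\abs{\calH}T)}$ and delivers the stated bound. I expect the main obstacle to be the bias analysis of the third step: one must measure the surrogate‑reward error under the IGW action distribution $p_t$ rather than the uniform one --- which is precisely why the guarantee on $\wh{h}$ has to come from uniformly‑collected data, with only one $K$‑factor loss --- one must invoke the fast $O(1/N)$ rate behind \eqref{eqn:ik_off_policy}, since a slow $O(1/\sqrt{N})$ rate would worsen the $T$‑exponent, and one must tune $\gamma$ adaptively so that $\calE_N$ enters as $\sqrt{KT\calE_N}$ rather than being multiplied by $\gamma$. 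Everything else is bookkeeping inside the \squareCB template.
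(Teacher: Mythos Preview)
Your proposal is correct and follows essentially the same route as the paper: reduce to the surrogate regret under $\fl$ via \pref{lem:underest}, apply the IGW per-round inequality of \citet{foster2020beyond} to get the $\tfrac{\gamma}{4}A+O(KT/\gamma)$ decomposition, relate $A$ to $\RegSq$ plus the squared bias $\calE_N$ of the surrogate reward, bound $\calE_N$ by the $\tfrac1\sigma$-Lipschitzness of $G$ together with the fast $O(1/N)$ rate behind \eqref{eqn:ik_off_policy} (paying one factor of $K$ for the change from $p_t$ to $\pi_{\unif}$), and balance $N$ and $\gamma$. The only cosmetic differences are that the paper uses AM-GM rather than your Cauchy--Schwarz step to get $A\le O(\RegSq+\calE_N)$, and it optimizes $N$ and $\gamma$ in a slightly different order; both lead to the same bound.
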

The proof mainly relies on the generalization bounds in \pref{lem:erm} and the property of $G$ in \pref{lem:underest}, and is deferred to \pref{app:on-policy}. 
We observe that \pref{alg:on_IGL} enjoys the same dependence on $T$, $K$, $\sigma^{-1}$ as \pref{alg:off_IGL}. 
For finite $\calF$, we can use Vovk's aggregation algorithm \citep{vovk1995game} as the regression oracle and achieve 
$\RegSq = \Ocal \pa{\log |\Fcal|}$, making the second term in the regret bound negligible.\footnote{We refer readers to \cite{foster2020beyond} for more examples of regression oracles when $\calF$ is not finite.}
While in theory our on-policy algorithm does not seem to be more favorable than the off-policy algorithm (mostly because they both need to uniformly explore for a certain period in order to build $\wh{h}$),
it can still perform better in practice, as shown in our experiments.

\begin{algorithm}[t!]
\caption{On-policy IGL}\label{alg:on_IGL}
Input: online regression oracle \AlgSq, number of exploration samples $N$, parameters $\alpha,\theta,\gamma$ and $\sigma= \frac{1}{2}\left(\frac{\theta}{\alpha}-\frac{1}{K-\alpha}\right)$.

\For{$t = 1, 2, \cdots , N$}{

    Receive context $x_t$, sample $a_t\sim \pi_{\unif}$, and observe signal $y_t$.
}
Calculate the empirical risk minimizer $\wh{h}$ as in \pref{eq:erm_h}.

\For{$t=N+1,\dots,T$}{
    Receive context $x_t$.
    
    Obtain an estimator $\wh{f}_t$ from the oracle \AlgSq.

    Calculate the action distribution $p_t$ as
    \begin{align}\label{eq:igw}
         p_{t,a}=\begin{cases}
             \frac{1}{K+ \gamma(\fhat_t(x_t,\wh{a})-\fhat_t(x_t,a))}, &\mbox{$a\neq \wh{a}$},\\
             1-\sum_{a'\neq \wh{a}}p_{a'}, &\mbox{$a=\wh{a}$}
         \end{cases},
    \end{align}
    where $\wh{a}=\argmax_{a \in [K]} \fhat_t(x_t,a)$.
    
    Sample $a_t$ from $p_t$ and receive feedback $y_t$.
    
    Feed $(x_t,a_t,G(\wh{h}_{a_t}(x_t,y_t),\frac{\theta}{\alpha}-\sigma,\sigma))$ to the oracle \AlgSq where $G$ is defined in \pref{eqn:reward_underestimator}.
}
\end{algorithm}
\section{Experiments}\label{sec:experiment}
In this section, we apply IGL to learning from image feedback and learning from text feedback. Specifically, we conduct experiments on the MNIST dataset and a conversational dataset to verify the effectiveness of our algorithms and the Lipschitz reward estimator constructed by \pref{eqn:reward_underestimator}.
\subsection{Experiments on Learning from Image Feedback}\label{sec:mnist}
\paragraph{Experiment Setup}
We first conduct experiments on MNIST dataset and implement both the off-policy algorithm \pref{alg:off_IGL} and the on-policy algorithm \pref{alg:on_IGL}. The setup is as follows: at each step $t$, the learner receives an image $x_t$ with ground-truth label $l_t$ and picks action $a_t$ from $\{0,\cdots,9\}$ as the predicted label. If the prediction $a_t$ is correct, the learner receives as feedback an image $y_t$ with digit $(l_t+1) \mod 10$; otherwise, the learner receives an image with digit $(l_t-1) \mod 10$. Therefore, different from the experimental setup in \citep{xie2021interaction}, our feedback $y_t$ does depend on both the context and the reward. Both function classes $\Hcal$ and $\Fcal$ are implemented as two-layer convolutional neural networks. We defer the implementation details to \pref{app:mnist_details}. 

The interaction lasts for $T=60000$ rounds. We use two metrics to evaluate the performance of the algorithm, including the average progressive reward during the interaction process and the test accuracy on a held-out test set containing $10000$ samples. When evaluating the on-policy algorithm on the test set, we take actions greedily according to $\wh{f}_T$. Since we use the uniform policy to collect data for learning the reward estimator, the progressive reward at that phase is counted as $\frac{1}{K}$. 

\paragraph{Results}
We run the experiments with $4$ different random seeds and report the mean value and standard deviation in \pref{tab:mnist_res}. We can see that both algorithms achieve good performance, despite never observing any true labels. While the theoretical regret guarantees for both algorithms are of the same order, empirically, the on-policy \pref{alg:on_IGL} performs better than \pref{alg:off_IGL} with over $90\%$ test accuracy. This is because the on-policy algorithm uses the inverse-gap weighting strategy to achieve a better trade-off between exploration and exploitation, while the off-policy algorithm learns the policy from uniform exploration. 
On the other hand, to demonstrate the effectiveness of the Lipschitz reward estimator, we compare the performances of the Lipschitz estimator \pref{eqn:reward_underestimator} with a binary reward estimator $\wh{r}_{\textrm{binary}}(x,y,a)=\mathbbm{1}\{\wh{h}_a(x,y) \ge \frac{\theta}{\alpha}\}$, where the parameter $\frac{\theta}{\alpha}$ is searched over the same space. The results in \pref{tab:mnist_res} show that the Lipschitz reward estimator improves over the binary one by a clear margin for both  algorithms. This matches our theoretical analysis that highlights the vital role of the Lipschitzness of the reward estimator in obtaining good regret guarantees. We also plot the performance of \pref{alg:on_IGL} under both true reward and constructed reward in the left figure of \pref{fig:IGL}. The figure shows that the constructed reward is indeed a lower bound of the true reward, and the policy can learn from the constructed reward effectively.
\begin{table}[t]
    \centering
        \caption{Performance of \pref{alg:off_IGL} and \pref{alg:on_IGL} on the MNIST dataset.}
    \label{tab:mnist_res}
    \begin{tabular}{lccc}
        \toprule
        \textbf{Algorithm} & \textbf{Reward Estimator} & \textbf{Average Progressive Reward} & \textbf{Test Accuracy} \\
        \midrule
        \multirow{2}{*}{Off-policy \pref{alg:off_IGL}} & Binary & 0.614 (0.012) & 72.6\% (1.5\%) \\
        & Lipschitz & 0.638 (0.042) & 75.9\% (4.9\%)\\
        \midrule
        \multirow{2}{*}{On-policy \pref{alg:on_IGL}} & Binary & 0.718 (0.006) & 89.4\% (4.0\%) \\
        & Lipschitz & 0.740 (0.022) & 90.3\% (3.7\%) \\
        \bottomrule
    \end{tabular}
\end{table}

\subsection{Experiments on Learning From Text Feedback}\label{sec:llm}

We further consider an application of learning with text feedback. The IGL framework fits this problem well since this is a natural reward-free setting involving learning from user's text feedback, which is idiosyncratically related to the quality of the actions taken and the question asked. Specifically, given the input of a question, the learner needs to select one of the $K$ possible answers. Instead of receiving whether the selected answer is correct, the learner only receives user's text feedback to the chosen answer. The goal of the learner is to choose the best answer only based on such text feedback. 

\paragraph{Dataset Construction}

Our dataset is constructed as follows. Specifically, we construct our question set $S=\{q_i\}_{i\in[20000]}$ from Chatbot Arena datasets~\citep{zheng2024judging}. Then, for each question $q_i\in S$ we ask a larger language model $\calG$ with a high ELO score on the chatsys leaderboard~\citep{chiang2024chatbot} to generate a ``good'' answer $g_{i,0}$ with $r_{i,0}=1$; and ask a (much) smaller language model $\calB$ with a (much) lower ELO score to generate $4$ ``bad'' answers $b_{i,j}$ with reward $r_{i,j}=0$, $j\in\{1,2,3,4\}$. Specifically, we pick $\calG$ to be ``Qwen1.5-32B-Chat'' with ELO score 1134 and  $\calB$ to be ``Qwen1.5-0.5B-Chat'' with ELO score\footnote{The bad model $\calB$ is not displayed on the leaderboard; 804 is the lowest displayed score.} less than 804~\citep{qwen}.\footnote{``Qwen1.5-32B-Chat'' is available at \url{https://huggingface.co/Qwen/Qwen1.5-32B-Chat} and ``Qwen1.5-0.5B-Chat'' is available at \url{https://huggingface.co/Qwen/Qwen1.5-0.5B-Chat}.} For each question-answer tuple $(q_i,g_{i,0}, b_{i,1}, b_{i,2}, b_{i,3}, b_{i,4})$, we ask another large language model $\calR$ to simulate a user response $f_{i,j}$, $j\in\{0,1,2,3,4\}$ to the good (bad) answers under the instruction that  the user is satisfied (unsatisfied) with the answer. We pick $\calR$ to be Qwen1.5-32B-Chat as well. This forms our final dataset $S_{\text{Conv}}=\{(q_i,(g_{i,0},f_{i,0},r_{i,0}), \{(b_{i,j},f_{i,j},r_{i,j})\}_{j\in[4]})\}_{i\in[20000]}$. Again, the true reward is never revealed to the learner, and we only use this reward to measure the performance of our algorithm. The prompt we use is deferred to \pref{app:prompt_feedback}. We generate our dataset using one A100 GPU for two weeks.
\subsubsection{Algorithm Configurations and Results}\label{sec:alg}
Given the superior performance of \pref{alg:on_IGL} over \pref{alg:off_IGL} on the MNIST dataset shown in \pref{sec:mnist}, we only test \pref{alg:on_IGL} on the conversational dataset. We use the first $N=10000$ data points to learn $\wh{h}$ and the remaining $|S|-N=10000$ data points to learn the optimal policy based on the reward function constructed via $\wh{h}$. We use the same parameter-free optimizer as the one in \pref{sec:mnist}. Next, we introduce the construction of $\wh{h}$ and the regression oracle: 

\begin{figure}[!t]
\centering
\includegraphics[width=0.4\textwidth]{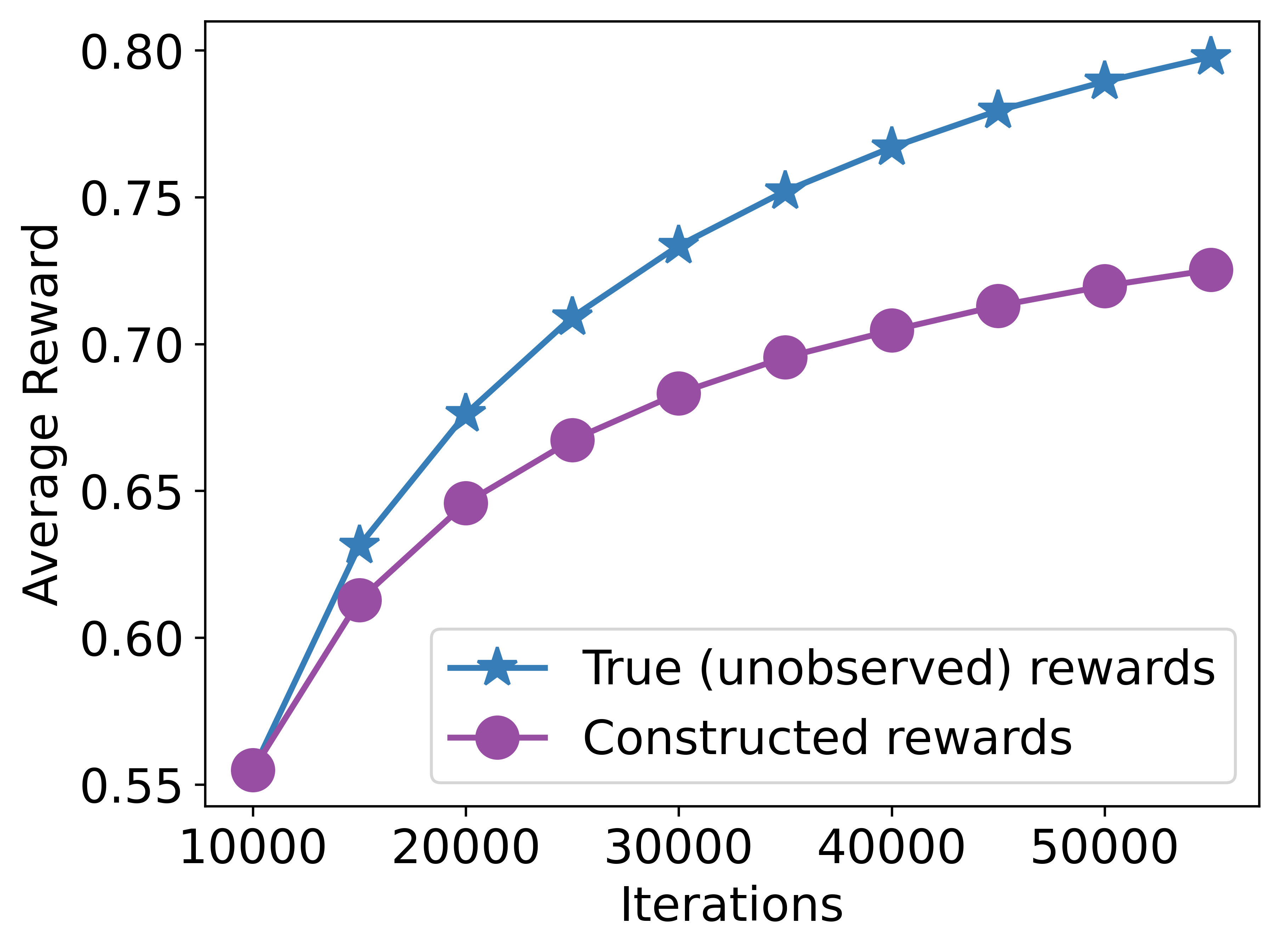}
\qquad
\includegraphics[width=0.4\textwidth]{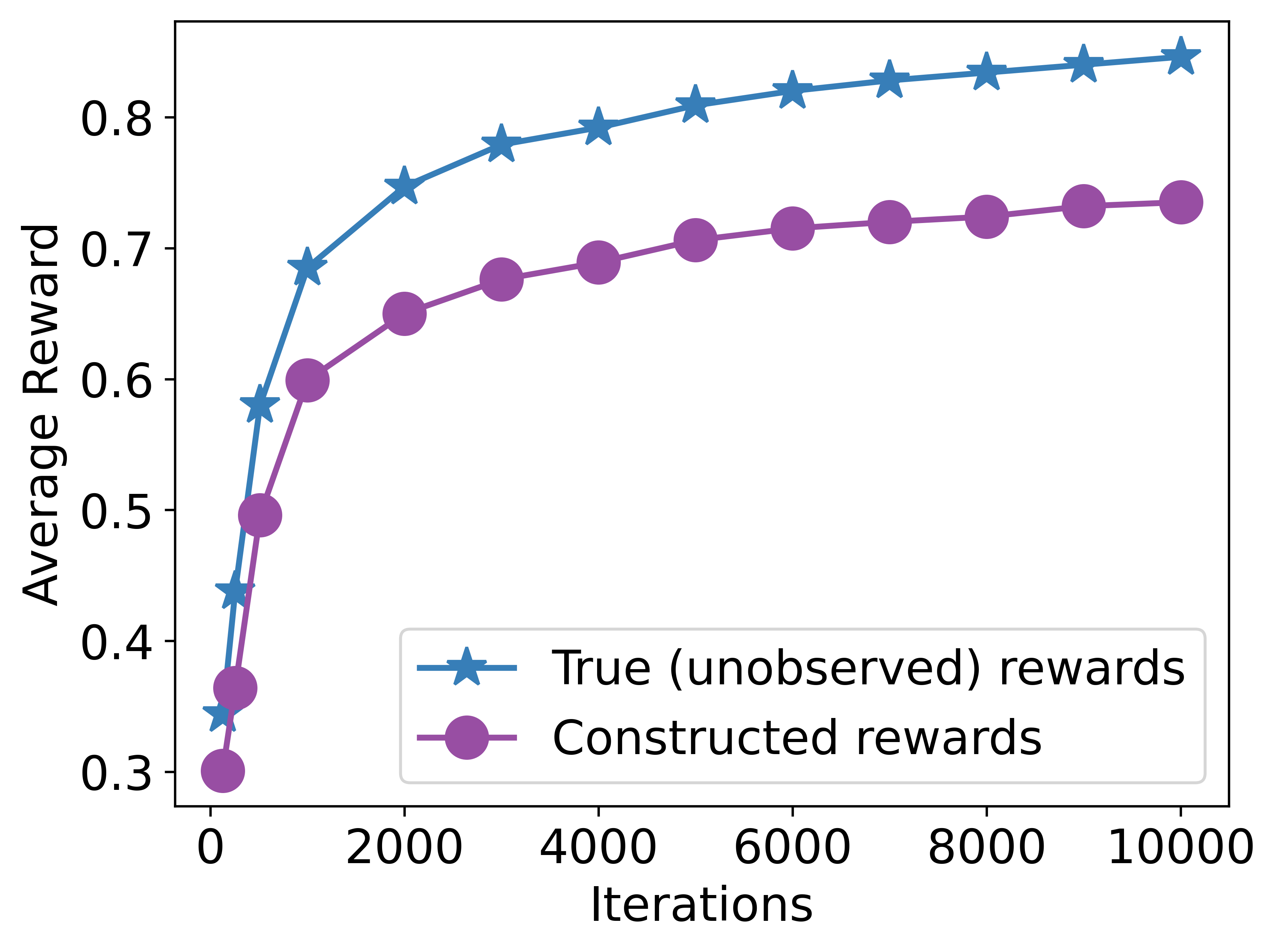}
\caption{
Performance of \pref{alg:on_IGL} under true (unobserved) rewards and constructed rewards. \textbf{Left figure}: Results on MNIST dataset.
\textbf{Right figure}: Results on our conversational dataset.
}
\label{fig:IGL}
\end{figure}

\paragraph{Inverse kinematic model.} The model class $\calH$ we consider is the pretrained language model Llama-3-8B-Instruct~\citep{llama3modelcard} with an additional multi-class classification head.\footnote{``Llama-3-8B-Instruct'': \url{https://huggingface.co/meta-llama/Meta-Llama-3-8B-Instruct}.} The language model is prompted with a question-answer-feedback tuple $(x,a,y)$; see \pref{app:prompt_ik} for the prompt. To learn the inverse kinematic model, we use parameter efficient fine-tuning~\citep{peft} with a rank-1 LORA adapter~\citep{hu2021lora} and binary cross entropy loss, which is with respect to the indicator of whether-or-not the predicted action corresponds to the action selected in the tuple. We successfully learn $\wh{h}$ using one A100 GPU within 6 hours. After obtaining $\wh{h}$, we construct the reward estimator $G(\wh{h}_a(x,y),\frac{\theta}{\alpha}-\sigma,\sigma)$ with $\sigma=0.1$, $\alpha=\frac{K}{2}=\frac{5}{2}$, and $\theta=1$.

\paragraph{Regression oracle.} 
Similar to the inverse kinematic model, the reward prediction function class $\calF$ is again based on the pretrained Llama-3-8B-Instruct model but with an additional regression head. Specifically, the language model is prompted only with a question-answer pair $(x,a)$ using the prompt deferred to \pref{app:prompt_policy} and predicts a score in $[0,1]$ for this question-answer pair. The regression oracle again applies parameter efficient fine-tuning with a different rank-1 LORA adapter on the regression loss, which measures the error of predicting the output of the reward predictor $G(\wh{h}_a(x,y),\frac{\theta}{\alpha}-\sigma,\sigma)$. This process is done on one A100 GPU within 3 hours.

\paragraph{Results.} The empirical results on the conversational dataset  are shown in \pref{fig:IGL}. We show the averaged true reward and average constructed reward received by \pref{alg:on_IGL} after the first $N=10000$ rounds of learning $\wh{h}$. The $x$-axis is the time horizon and $y$-axis is the value of average reward. Similar to our experiment results in \pref{sec:mnist}, the right figure in \pref{fig:IGL} shows that our constructed reward estimator is a lower bound on the true reward, matching our theoretical results in \pref{lem:underest}, and \pref{alg:on_IGL} is able to learn the reward effectively through the text feedback with the constructed reward estimator.

\bibliography{ref}
\bibliographystyle{plainnat}

\newpage
\appendix
\section{Proofs in \pref{sec:reward}}\label{app:reward}
\IK*
\begin{proof}
    \begin{align*}
    \Pr[a|x,y] &= \frac{\Pr[a|x]\cdot \Pr[y|x,a]}{\Pr[y|x]} \\
    &= \Pr[a|x]\frac{\Pr[r=1|x,a] \cdot\Pr[y|x,a,r=1] + \Pr[r=0|x,a] \cdot\Pr[y|x,a,r=0]}{\Pr[y|x]} \\
    &= \Pr[a|x]\frac{f^\star(x,a) \cdot\Pr[y|x,r=1] + (1-f^\star(x,a)) \cdot\Pr[y|x,r=0]}{\Pr[y|x]} \\
    &=\Pr[a|x]\frac{f^\star(x,a) \cdot\Pr[r=1|x,y]}{\Pr[r=1|x]} + \Pr[a|x]\frac{(1-f^\star(x,a)) \cdot\Pr[r=0|x,y]}{\Pr[r=0|x]} \\
    &=\Pr[a|x]\frac{f^\star(x,a) \cdot\Pr[r=1|x,y]}{\sum_{a'=1}^K\Pr[a'|x]\Pr[r=1|x,a']} + \Pr[a|x]\frac{(1-f^\star(x,a)) \cdot\Pr[r=0|x,y]}{\sum_{a'=1}^K\Pr[a'|x]\Pr[r=0|x,a']} \\
    &=\Pr[a|x]\frac{f^\star(x,a) \cdot\Pr[r=1|x,y]}{\sum_{a'=1}^K\Pr[a'|x]f^\star(x,a')} + \Pr[a|x]\frac{(1-f^\star(x,a)) \cdot\Pr[r=0|x,y]}{\sum_{a'=1}^K\Pr[a'|x](1-f^\star(x,a')}.
\end{align*}
Recall that $\pi_{\unif}=\frac{1}{K}\cdot\one$ is the policy which uniformly samples an action regardless of the context. Under $\pi_{\unif}$, we know that
\begin{align*}
    \Pr[a|x,y]=\frac{f^\star(x,a) \cdot\phi^\star(x,y)}{\sum_{a'=1}^Kf^\star(x,a')} + \frac{(1-f^\star(x,a)) (1-\phi^\star(x,y))}{K-\sum_{a'=1}^Kf^\star(x,a')}.
\end{align*}
\end{proof}

\erm*
\begin{proof}
    For notational convenience, we denote $(x,y,a)$ by $Z$ and define $\ell_h(Z)=\|h(x,y)-e_a\|_2^2$. Now we aim to show that $\ell_h(Z)$ satisfies the strong $\eta$-central condition for some $\eta>0$~\citep{van2015fast}. Specifically, we aim to show that
    \begin{align}\label{eqn:2-central}
        \E_{Z}\left[\exp(-\eta(\ell_h(Z)-\ell_{h^\star}(Z)))\right] \leq 1.
    \end{align}
    To show this, direct calculation shows that
    \begin{align*}
        &\E_Z\left[\exp(-\eta(\ell_h(Z)-\ell_{h^\star}(Z)))\right] \\
        &=\E_{x,y}\left[\exp(-\eta\|h(x,y)-h^\star(x,y)\|_2^2)\cdot\E_{a|x,y}\left[\exp\left(-2\eta(h(x,y)-h^\star(x,y))^\top(h^\star(x,y)-e_a)\right)\right]\right].
    \end{align*}
    Since $\E_{a|x,y}[e_a]=h^\star(x,y)$, the random variable $(h(x,y)-h^\star(x,y))^\top(h^\star(x,y)-e_a)$ given $x$ and $y$ is zero-mean and is within the range $[-2\|h(x,y)-h^\star(x,y)\|_2,2\|h(x,y)-h^\star(x,y)\|_2]$. Therefore, we know that $(h(x,y)-h^\star(x,y))^\top(h^\star(x,y)-e_a)$ is $\|h(x,y)-h^\star(x,y)\|_2^2$-sub-Gaussian and we have
    \begin{align*}
        &\E_Z\left[\exp(-\eta(\ell_h(Z)-\ell_{h^\star}(Z)))\right]\\
        &\leq \E_{x,y}\left[\exp(-\eta\|h(x,y)-h^\star(x,y)\|_2^2)\exp\left(\frac{4\eta^2\|h(x,y)-h^\star(x,y)\|_2^2}{2}\right)\right] \\
        &=\E_{x,y}\left[\exp\left(\left(2\eta^2-\eta\right)\|h(x,y)-h^\star(x,y)\|_2^2\right)\right].
    \end{align*}
    Picking $\eta=\frac{1}{2}$ proves \pref{eqn:2-central}. Noticing the fact that $|\ell_{h}(Z)|\leq 4$ for all $h\in\calH$ and applying Theorem~7.6 in~\citep{van2015fast} proves \pref{eqn:ik_off_policy}.

    Now we prove \pref{eqn:ik_on_policy}. Noticing the fact that $\E_{a|x,y}[e_a]=h^\star(x,y)$ and applying this to \pref{eqn:ik_off_policy}, we know that with probability at least $1-\delta$,
\begin{align*}
    \E_{x,y}\left[\|\wh{h}(x,y)-h^\star(x,y)\|_2^2 \right] \leq \order\left(\frac{\log\frac{|\calH|}{\delta}}{N}\right).
\end{align*}
This means that
\begin{align}\label{eqn:inter_exp}
\E_{x\sim\calD}\E_{a'\sim \pi_{\unif}} \E_{y|x,a'} \left[\|\hhat(x,y)-h^\star(x,y)\|_2^2\right] \leq \order\left(\frac{\log\frac{|\calH|}{\delta}}{N}\right).
\end{align}
Further applying Jensen's inequality shows that
\begin{align*}
    &\E_{x\sim \calD}\E_{a'\sim \pi_{\unif}}\E_{y|x,a'}\left[\|\wh{h}(x,y)-h^\star(x,y)\|_2\right] \\
    &= \E_{x\sim \calD}\E_{a'\sim \pi_{\unif}}\E_{y|x,a'}\left[\sqrt{\|\wh{h}(x,y)-h^\star(x,y)\|_2^2}\right] \\
    &\leq \sqrt{\E_{x\sim \calD}\E_{a'\sim \pi_{\unif}}\E_{y|x,a'}\left[\|\wh{h}(x,y)-h^\star(x,y)\|_2^2\right]} \tag{Jensen's inequality}\\
    &\leq \order\left(\sqrt{\frac{\log\frac{|\calH|}{\delta}}{N}}\right).
\end{align*}
    
\end{proof}

\underest*
\begin{proof}
    To prove the first property, we show $\phi^\star(x,y)=1$ if $h_a^\star(x,y)\geq \frac{\theta}{\alpha}-\sigma$. Specifically, if $h_a^\star(x,y)\geq \frac{\theta}{\alpha}-\sigma$ and $\phi^\star(x,y)=0$, then we know that
    \begin{align*}
        \frac{\theta}{\alpha}-\sigma\leq h_a^\star(x,y)=\frac{1-f^\star(x,a)}{K-\sum_{i=1}^Kf^\star(x,i)}\leq \frac{1}{K-\alpha},
    \end{align*}
    leading to a contradiction according to the definition of $\sigma$. Therefore, when $h_a^\star(x,y)\geq \frac{\theta}{\alpha}-\sigma$, we have $\phi^\star(x,y)=1$, which is surely an upper bound of $G(h_a^\star(x,y),\frac{\theta}{\alpha}-\sigma,\sigma)$ since $G(h_a^\star(x,y),\frac{\theta}{\alpha}-\sigma,\sigma)\leq 1$. When $h_a^\star(x,y)<\frac{\theta}{\alpha}-\sigma$, by definition, we know that $G(h_a^\star(x,y),\frac{\theta}{\alpha}-\sigma,\sigma)=0\leq \phi^\star(x,y)$. Combining the two cases finishes the proof for the first property.

    To prove the second property, note that when $\phi^\star(x,y)=1$ and $a=\pi^\star(x)$, we have
    \begin{align*}
    h_{\pi^\star(x)}^\star(x,y)=\frac{f^\star(x,\pi^\star(x))}{\sum_{i=1}^Kf^\star(x,i)} \geq \frac{\theta}{\alpha},
    \end{align*}
    where the last inequality is by \pref{assum:sparse}. This means that $G(h_a^\star(x,y),\frac{\theta}{\alpha}-\sigma,\sigma)=1=\phi^\star(x,y)$. When $\phi^\star(x,y)=0$, we have
    \begin{align*}
    h_{\pi^\star(x)}^\star(x,y)=\frac{1-f^\star(x,\pi^\star(x))}{K-\sum_{i=1}^Kf^\star(x,i)} \leq \frac{1}{K-\alpha}\leq \frac{\theta}{\alpha}-\sigma,
    \end{align*}
    meaning that $G(h_a^\star(x,y),\frac{\theta}{\alpha}-\sigma,\sigma)=0$. This finishes the proof.
\end{proof}

\section{Proofs in \pref{sec:off-policy}}\label{app:off-policy}
For any policy $\pi$ and reward estimator $r_{\sigma}$, 
we define the value function $V(\pi,r_{\sigma})$ and the empirical value function $\wh{V}(\pi,r_{\sigma})$ as follows:
\begin{align}
    \wh{V}(\pi,r_{\sigma})&=\frac{1}{N}\sum_{i=N+1}^{2N} K\cdot\pi(a_i|x_i)\cdot r_{\sigma}(x_i,y_i,a_i), \label{eqn:empirical_v}\\
    V(\pi,r_{\sigma})&=\E_{x \sim \Dcal,a \sim \pi_{\unif}, y \mid x,a}\left[K\cdot\pi(a|x)\cdot r_{\sigma}(x,y,a)\right],\label{eqn:r_sigma}
\end{align}
where $\{(x_i,a_i,y_i)\}_{i=N+1}^{2N}$ is collected with the uniform policy $\pi_{\unif}$. The true value function $V(\pi)$ is defined as:
\begin{align*}
V(\pi)&=\E_{x \sim \Dcal}\left[\sum_{a=1}^K\pi(a|x)\cdot f^\star(x,a)\right].
\end{align*}
Using \pref{lem:underest}, we can establish the equivalence between $V(\pi^\star)$ and $V(\pi^\star,r^\star_{\sigma})$ where $r^\star_{\sigma}(x,y,a)=G(h_{a}^\star(x,y),\frac{\theta}{\alpha}-\sigma,\sigma)$.
\begin{align}
V(\pi^\star)&=\E_{x \sim \Dcal,a \sim \pi_{\unif}}\left[K\cdot \pi^\star(a|x)\cdot f^\star(x,a)\right] \nonumber \\
&=\E_{x \sim \Dcal,a \sim \pi_{\unif}, y \mid x,a}\left[K\cdot \pi^\star(a|x)\cdot \phi^\star(x,y)\right] \nonumber \tag{\pref{assum:realizability}} \\
&=\E_{x \sim \Dcal,a \sim \pi_{\unif}, y \mid x,a}\left[K\cdot\pi^\star(a|x)\cdot G\left(h_{a}^\star(x,y),\frac{\theta}{\alpha}-\sigma,\sigma\right)\right] \nonumber \tag{\pref{lem:underest}} \\
&=V(\pi^\star,r_{\sigma}^\star). \label{eq:equiv_pis} 
\end{align}
In the following lemma, we prove that $\wh{\pi}$ is near optimal under the true reward.
\begin{lemma}\label{lem:optimal_pihat}
Under \pref{assum:realizability} and \pref{assum:sparse}, the following holds with probability at least $1-3\delta$:
\begin{align*}
V(\pi^\star)-V(\wh{\pi}) \le \mathcal{O}\pa{\frac{K}{\sigma}\sqrt{\frac{\log \frac{|\Hcal|}{\delta}}{N}}},
\end{align*}
where $\wh{\pi}$ is defined in \pref{alg:off_IGL}.
\end{lemma}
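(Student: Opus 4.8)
The plan is to bound $V(\pi^\star)-V(\whpi)$ by inserting the surrogate value functions $V(\cdot,r_\sigma^\star)$, $V(\cdot,\wh r_\sigma)$, and the empirical version $\wh V(\cdot,\wh r_\sigma)$, and then controlling each difference. Concretely, I would write the decomposition
\begin{align*}
V(\pi^\star)-V(\whpi)
&= \underbrace{V(\pi^\star)-V(\pi^\star,\wh r_\sigma)}_{(\mathrm{I})}
 + \underbrace{V(\pi^\star,\wh r_\sigma)-\wh V(\pi^\star,\wh r_\sigma)}_{(\mathrm{II})}\\
&\quad + \underbrace{\wh V(\pi^\star,\wh r_\sigma)-\wh V(\whpi,\wh r_\sigma)}_{(\mathrm{III})}
 + \underbrace{\wh V(\whpi,\wh r_\sigma)-V(\whpi,\wh r_\sigma)}_{(\mathrm{IV})}
 + \underbrace{V(\whpi,\wh r_\sigma)-V(\whpi)}_{(\mathrm{V})}.
\end{align*}
Term $(\mathrm{III})$ is $\le 0$ by definition of $\whpi$ as the empirical maximizer over $\Pi$ (using the second half of the uniform-exploration data), so it can be dropped. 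For $(\mathrm{V})$, since $\wh r_\sigma(x,y,a)=G(\wh h_a(x,y),\tfrac\theta\alpha-\sigma,\sigma)\le 1$ and $f^\star\ge \fl(x,a)=\E_{y\mid x,a}[G(h_a^\star(x,y),\tfrac\theta\alpha-\sigma,\sigma)]$ pointwise by \pref{lem:underest}, I would show $V(\whpi,\wh r_\sigma)-V(\whpi)\le \E_{x,a\sim\pi_{\unif},y}[K\,\whpi(a|x)\,(\wh r_\sigma-r_\sigma^\star)(x,y,a)]$, which is at most $K\cdot\E[\,|\wh r_\sigma - r_\sigma^\star|\,]$ after bounding $\whpi(a|x)\le 1$; then the $\tfrac1\sigma$-Lipschitzness of $G$ and the $\ell_1$/$\ell_2$ bound in \pref{eqn:ik_on_policy} of \pref{lem:erm} give $(\mathrm{V})\le \Ocal\!\big(\tfrac{K}{\sigma}\sqrt{\log(|\Hcal|/\delta)/N}\big)$. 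Term $(\mathrm{I})$ is handled the same way, except one first uses \pref{eq:equiv_pis} (the identity $V(\pi^\star)=V(\pi^\star,r_\sigma^\star)$) to replace $V(\pi^\star)$ by $V(\pi^\star,r_\sigma^\star)$, and then bounds $V(\pi^\star,r_\sigma^\star)-V(\pi^\star,\wh r_\sigma)$ by the same $\tfrac{K}{\sigma}$-Lipschitz/generalization argument.

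For the concentration terms $(\mathrm{II})$ and $(\mathrm{IV})$, I would apply a standard Hoeffding/Bernstein bound: $\wh V(\pi,\wh r_\sigma)$ is an average of $N$ i.i.d. terms $K\,\pi(a_i|x_i)\,\wh r_\sigma(x_i,y_i,a_i)\in[0,K]$ whose expectation is $V(\pi,\wh r_\sigma)$, conditioned on the first $N$ samples (so $\wh h$, hence $\wh r_\sigma$, is fixed). For $(\mathrm{II})$ this is a single fixed policy $\pi^\star$, so a direct Hoeffding bound gives deviation $\Ocal(K\sqrt{\log(1/\delta)/N})$. For $(\mathrm{IV})$ the policy $\whpi$ is data-dependent, so I would take a union bound over $\Pi=\{\pi_f: f\in\calF\}$, paying an extra $\log|\calF|\le\log|\calH|$ factor, yielding deviation $\Ocal(K\sqrt{\log(|\calF|/\delta)/N})$. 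All five contributions combine, via a union bound over the (at most three) failure events — one for the ERM guarantee of \pref{lem:erm}, used for both $(\mathrm{I})$ and $(\mathrm{V})$, and two Hoeffding events — into the claimed $\Ocal\!\big(\tfrac{K}{\sigma}\sqrt{\log(|\Hcal|/\delta)/N}\big)$ bound holding with probability $1-3\delta$, since $K/\sigma\ge 1$ and $\sqrt{\log(|\Hcal|/\delta)/N}$ dominates $\sqrt{\log(1/\delta)/N}$.

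The main obstacle I anticipate is the passage from the $\ell_2$-type generalization guarantee on $\wh h$ (\pref{eqn:ik_on_policy}, which controls $\E[\|\wh h(x,y)-h^\star(x,y)\|_2]$) to a bound on $\E[|\wh r_\sigma(x,y,a)-r_\sigma^\star(x,y,a)|]$ for a \emph{per-coordinate} quantity $G(\wh h_a(x,y),\cdot,\cdot)$ versus $G(h^\star_a(x,y),\cdot,\cdot)$. The $\tfrac1\sigma$-Lipschitzness of $G$ reduces this to $\E[|\wh h_a(x,y)-h_a^\star(x,y)|]\le \E[\|\wh h(x,y)-h^\star(x,y)\|_2]$, which is exactly what \pref{lem:erm} provides — but one must be careful that the data distribution in \pref{eqn:ik_on_policy} (with $a'\sim\pi_{\unif}$ fresh, independent of the coordinate $a$ being queried) is the right one, so that the bound applies uniformly over $a\in[K]$ without an extra $K$ factor. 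A secondary subtlety is that terms $(\mathrm{I})$ and $(\mathrm{V})$ are one-sided (we only need the surrogate to \emph{underestimate}), which is what lets us use \pref{lem:underest} to kill the ``wrong-direction'' error and only pay for $|\wh r_\sigma-r_\sigma^\star|$; I would make sure the inequalities are oriented correctly when dropping the $f^\star\ge\fl$ slack.
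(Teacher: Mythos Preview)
Your proposal is correct and follows essentially the same approach as the paper: the same five-way decomposition, the same use of \pref{lem:underest} for the one-sided inequality, the $\tfrac{1}{\sigma}$-Lipschitzness of $G$, the ERM generalization bound from \pref{lem:erm}, and Hoeffding with a union bound over $\Pi$. The only (cosmetic) difference is the order in which you apply concentration and the Lipschitz step: you concentrate $\wh V(\pi,\wh r_\sigma)$ around $V(\pi,\wh r_\sigma)$ (conditioning on the first batch so $\wh r_\sigma$ is fixed) and then bound $|V(\pi,\wh r_\sigma)-V(\pi,r_\sigma^\star)|$ at the population level via \pref{eqn:ik_on_policy}, whereas the paper concentrates $\wh V(\pi,r_\sigma^\star)$ around $V(\pi,r_\sigma^\star)$ and bounds $|\wh V(\pi,\wh r_\sigma)-\wh V(\pi,r_\sigma^\star)|$ empirically, which requires one extra Hoeffding step (their \pref{eqn:hoeffding_h}) to transfer the population guarantee of \pref{lem:erm} to the second-half sample. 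Both orderings yield the same $\Ocal\!\big(\tfrac{K}{\sigma}\sqrt{\log(|\calH|/\delta)/N}\big)$ bound with probability $1-3\delta$; your route is arguably slightly cleaner. Your worry about the coordinate $a$ versus the sampled action is not an issue here, since in terms (I) and (V) they coincide and $|\wh h_a-h^\star_a|\le\|\wh h-h^\star\|_2$ pointwise.
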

\begin{proof}
We first introduce some high probability events that the following analysis is based on. First, according to Hoeffding's inequality together with a union bound over $\pi\in\Pi$ (note that $|\Pi|=|\calF|\leq |\calH|$), with probability at least $1-\delta$, we have for any $\pi \in \Pi$
\begin{align}\label{eq:hoeffding_pi}
\left|V(\pi,r_{\sigma}^\star) - \wh{V}(\pi, r_{\sigma}^\star) \right| \le \order\left(K\cdot\sqrt{\frac{\log\frac{|\Hcal|}{\delta}}{N}}\right).
\end{align}
In addition, according to Hoeffding's inequality with a union bound over $h\in\calH$, with probability at least $1-\delta$, we have for any $h \in \Hcal$, the following inequality holds:
\begin{align}
    &\left|\frac{1}{N}\sum_{i=N+1}^{2N}\|h(x_i,y_i)-h^\star(x_i,y_i)\|_2-\E_{x \sim \calD,a \sim \pi_{\unif}, y \mid x,a }\left[\|h(x,y)-  h^\star(x,y)\|_2 \right] \right|\nonumber\\
    &\leq \order\left(\sqrt{\frac{\log\frac{|\Hcal|}{\delta}}{N}}\right). \label{eqn:hoeffding_h}
\end{align}
Combining \pref{eqn:hoeffding_h} with \pref{eqn:ik_on_policy} in \pref{lem:erm}, we know that with probability at least $1-2\delta$,
\begin{align}\label{eq:h_l1}
\frac{1}{N}\sum_{i=N+1}^{2N} \left|\wh{h}_{a_i}(x_i,y_i)-h^\star_{a_i}(x_i,y_i)\right| 
&\leq \frac{1}{N}\sum_{i=N+1}^{2N}\|\wh{h}(x_i,y_i)-h^\star(x_i,y_i)\|_2 \nonumber \\
&\le \order\left(\sqrt{\frac{\log\frac{|\Hcal|}{\delta}}{N}}\right).
\end{align}
The following analysis is based on the condition that \pref{eq:hoeffding_pi} and \pref{eq:h_l1} hold.

\paragraph{Bounding $\left|\wh{V}(\pi,\rhat_\sigma)-\wh{V}(\pi,r^\star_\sigma)\right|$.}
We first show that for any policy $\pi \in \Pi$, the prediction from $\wh{r}_{\sigma}$ is close to $r^\star_{\sigma}$ in terms of the empirical value function defined in \pref{eqn:empirical_v}, where $\wh{r}_{\sigma}(x,y,a)\triangleq G(\wh{h}_a(x,y),\frac{\theta}{\alpha}-\sigma,\sigma)$.
\begin{align} \label{eq:bound_vhat}
\left|\wh{V}(\pi,\rhat_\sigma)-\wh{V}(\pi,r^\star_\sigma)\right|&\le \frac{1}{N} \sum_{i=N+1}^{2N} K\cdot \pi(a_i|x_i) \left|\rhat_{\sigma}(x_i,y_i,a_i)-r^\star_{\sigma}(x_i,y_i,a_i)\right| \nonumber \\
&\le \frac{K}{N} \sum_{i=N+1}^{2N} \left|\rhat_{\sigma}(x_i,y_i,a_i)-r^\star_{\sigma}(x_i,y_i,a_i)\right| \nonumber \\
&=\frac{K}{N} \sum_{i=N+1}^{2N} \left|G(\wh{h}_{a_i}(x_i,y_i),\frac{\theta}{\alpha}-\sigma,\sigma)-G(h^\star_{a_i}(x_i,y_i),\frac{\theta}{\alpha}-\sigma,\sigma)\right| \nonumber \\
&\le \frac{K}{N \sigma} \sum_{i=N+1}^{2N} \left|\wh{h}_{a_i}(x_i,y_i)-h^\star_{a_i}(x_i,y_i)\right| \nonumber \\
&\leq \frac{K}{N \sigma} \sum_{i=N+1}^{2N} \|\wh{h}(x_i,y_i)-h^\star(x_i,y_i)\|_2 \nonumber \\
&\le \mathcal{O}\pa{\frac{K}{\sigma}\sqrt{\frac{\log \frac{|\Hcal|}{\delta}}{N}}}.
\end{align}
The third inequality is because $G(v,\beta,\sigma)$ is $\frac{1}{\sigma}$-Lipschitz in $v$ and the last inequality is from \pref{eq:h_l1}.

\paragraph{Lower bound $V(\pi)$ by $\wh{V}(\pi,\rhat_{\sigma})$.} Then, we show that for any policy $\pi\in\Pi$, $V(\pi)$ is lower bounded by $\wh{V}(\pi,\rhat_{\sigma})$:
\begin{align}
V(\pi)&=\E_{x\sim \calD}\bra{\sum_{a=1}^K \pi(a|x) f^\star(x,a)} \nonumber \\
&=\E_{x\sim \calD, a \sim \pi_{\unif}}\bra{K \pi(a|x) f^\star(x,a)} \nonumber \\
&=\E_{x\sim \calD, a \sim \pi_{\unif}}\E_{y|x,a} \bra{K \pi(a|x) \phi^\star(x,y)} \tag{\pref{assum:realizability}}  \\
&\ge \E_{x\sim \calD, a \sim \pi_{\unif}}\E_{y|x,a} \bra{K \pi(a|x) G(h_{a}^\star(x,y),\frac{\theta}{\alpha}-\sigma,\sigma)} \nonumber \\
&=V(\pi,r^\star_{\sigma}) \tag{\pref{eqn:r_sigma}} \\
&=V(\pi,r^\star_{\sigma})-\wh{V}(\pi,r^\star_{\sigma})+\wh{V}(\pi,r^\star_{\sigma})-\wh{V}(\pi,\rhat_{\sigma})+\wh{V}(\pi,\rhat_{\sigma}) \nonumber \\
&\ge \wh{V}(\pi,\rhat_{\sigma})-\mathcal{O}\pa{\frac{K}{\sigma}\sqrt{\frac{\log \frac{|\Hcal|}{\delta}}{N}}}. \label{eq:lower_bound_v}
\end{align}
The first inequality is from \pref{lem:underest} and the second inequality is from \pref{eq:hoeffding_pi} and \pref{eq:bound_vhat}. Recall that $\wh{\pi}=\max_{\pi \in \Pi} \wh{V}(\pi,\wh{r}_{\sigma})$, we then know that with probability at least $1-3\delta$,
\begin{align*}
V(\pi^\star)-V(\wh{\pi}) &\le V(\pi^\star)-\wh{V}(\wh{\pi},\rhat_{\sigma})+\mathcal{O}\pa{\frac{K}{\sigma}\sqrt{\frac{\log \frac{|\Hcal|}{\delta}}{N}}} \tag{\pref{eq:lower_bound_v}}\\
&\le V(\pi^\star)-\wh{V}(\pi^\star,\rhat_{\sigma})+\mathcal{O}\pa{\frac{K}{\sigma}\sqrt{\frac{\log \frac{|\Hcal|}{\delta}}{N}}} \tag{optimality of $\wh{\pi}$ on $\wh{V}(\pi,\wh{r}_{\sigma})$}\\
&\le V(\pi^\star)-\wh{V}(\pi^\star,r^\star_{\sigma})+\mathcal{O}\pa{\frac{K}{\sigma}\sqrt{\frac{\log \frac{|\Hcal|}{\delta}}{N}}} \tag{\pref{eq:bound_vhat}}\\
&\le V(\pi^\star)-V(\pi^\star,r^\star_{\sigma})+\mathcal{O}\pa{\frac{K}{\sigma}\sqrt{\frac{\log \frac{|\Hcal|}{\delta}}{N}}} \tag{\pref{eq:hoeffding_pi}}\\
&= \mathcal{O}\pa{\frac{K}{\sigma}\sqrt{\frac{\log \frac{|\Hcal|}{\delta}}{N}}}, \tag{\pref{eq:equiv_pis}}
\end{align*}
which finishes the proof.
\end{proof}
Next, we prove \pref{thm:reg_off}.
\Offline*
\begin{proof}
We bound $\RegCB$ as follows:
\begin{align*}
\RegCB&=\E\left[\sum_{t=1}^Tf^\star(x_t,\pi^\star(x_t))-\sum_{t=1}^T f^\star(x_t,a_t)\right] \\
&\le 2N+\E\left[\sum_{t=2N+1}^Tf^\star(x_t,\pi^\star(x_t))-\sum_{t=2N+1}^T f^\star(x_t,a_t)\right] \\
&=2N+\E\left[\sum_{t=2N+1}^Tf^\star(x_t,\pi^\star(x_t))-\sum_{t=2N+1}^T f^\star(x_t,\wh{\pi}(x_t))\right] \\
&\leq 2N+T\cdot \E\left[V(\pi^\star)-V(\wh{\pi})\right] \\
&\le \mathcal{O}\pa{N+\frac{TK}{\sigma}\sqrt{\frac{\log(|\Hcal|T)}{N}}}.
\end{align*}
The last step is from \pref{lem:optimal_pihat} with $\delta = \frac{1}{T}$. Picking $N=T^{2/3}K^{2/3}{\sigma}^{-2/3}\log^{1/3}(|\calH|T)$ finishes the proof.
\end{proof}

\section{Proofs in \pref{sec:on-policy}}\label{app:on-policy}
\Online*
\begin{proof}
According to \pref{eqn:inter_exp}, we know that with probability at least $1-\delta$,
\begin{align}
&\sum_{a=1}^K \E_{x\sim\calD}\E_{y|x,a}\pa{\wh{h}_a(x,y)-h_a^\star(x,y)}^2 \nonumber\\
&\leq K\cdot \E_{x\sim\calD, a\sim \pi_{\unif},y|x,a}\left[\left(\wh{h}_a(x,y)-h_a^\star(x,y)\right)^2\right] \nonumber\\
&\leq K\cdot \E_{x\sim\calD, a\sim\pi_{\unif},y|x,a}\left[\|\wh{h}(x,y)-h^\star(x,y)\|_2^2\right]\nonumber\\
&\leq \order\left(\frac{K\log\frac{|\calH|}{\delta}}{N}\right),\label{eq:gen_on}
\end{align}
where the last inequality uses \pref{eqn:inter_exp}.
Let $a_t^\star=\argmax_{a} f^\star(x_t,a)$ and recall that $\fl(x,a):=\E_{y|x,a}\left[G(h_a^\star(x,y),\frac{\theta}{\alpha}-\sigma,\sigma)\right] \in\calF$. We have
\begin{align}
\RegCB&=\E\left[\sum_{t=1}^Tf^\star(x_t,a_t^\star)- \sum_{t=1}^Tf^\star(x_t,a_t)\right] \nonumber \\
&=\E\bra{\sum_{t=1}^T \E_{y|x_t,a_t^\star}[\phi^\star(x_t,y)]-\sum_{t=1}^T \E_{y'|x_t,a_t}[\phi^\star(x_t,y')]} \tag{From \pref{assum:realizability}} \nonumber \\
&\le \E\bra{\sum_{t=1}^T \fl(x_t,a_t^\star)-\sum_{t=1}^T \fl(x_t,a_t)} \tag{From \pref{lem:underest}} \nonumber \\
&\le \frac{\gamma}{4}\E\bra{\sum_{t=1}^T (\wh{f}_t(x_t,a_t)-\fl(x_t,a_t))^2} + \Ocal\pa{\frac{TK}{\gamma}} \label{eq:bound_regcb}
\end{align}
The last step is from \citet[Lemma 3]{foster2020beyond}. Recall that $\wh{r}_{\sigma}(x_t,y_t,a_t)=G(\wh{h}_{a_t}(x_t,y_t),\frac{\theta}{\alpha}-\sigma,\sigma)$. Direct calculation shows that
\begin{align}
\RegSq &\geq \E \bra{\sum_{t=1}^T (\fhat_t(x_t,a_t)-\wh{r}_{\sigma}(x_t,y_t,a_t))^2-\sum_{t=1}^T (\fl(x_t,a_t)-\wh{r}_{\sigma}(x_t,y_t,a_t))^2} \nonumber \\
&=\E\bra{\sum_{t=1}^T (\fhat_t(a_t)-\fl(x_t,a_t))(\wh{f}_t(x_t,a_t)+\fl(x_t,a_t)-2 \wh{r}_{\sigma}(x_t,y_t,a_t))} \nonumber \\
&= \E\bra{\sum_{t=1}^T (\fhat_t(x_t,a_t)-\fl(x_t,a_t))^2} \nonumber \\
&\qquad +2\E\bra{\sum_{t=1}^T (\fhat_t(x_t,a_t)-\fl(x_t,a_t))(\fl(x_t,a_t)-\E_{y_t}[\wh{r}_{\sigma}(x_t,y_t,a_t)])} \nonumber\\
&\ge  \frac{1}{2} \E\bra{\sum_{t=1}^T (\fhat_t(x_t,a_t)-\fl(x_t,a_t))^2}-2\E\bra{\sum_{t=1}^T (\fl(x_t,a_t)-\E_{y_t}[\wh{r}_{\sigma}(x_t,y_t,a_t)])^2}, \label{eq:bound_regsq}
\end{align}
where the last step is by AM-GM inequality. For the second term, we know that
\begin{align}
&\E\left[\sum_{t=1}^T \pa{\fl(x_t,a_t)-\E_{y_t}[\wh{r}_{\sigma}(x_t,y_t,a_t)]}^2\right]\nonumber \\
&=\E\left[\sum_{t=1}^T \pa{\E_{y_t|x_t,a_t}\left[G\left(h_{a_t}^\star(x_t,y_t),\frac{\theta}{\alpha}-\sigma,\sigma\right)-G\left(\hhat_{a_t}(x_t,y_t),\frac{\theta}{\alpha}-\sigma,\sigma\right)\right]}^2\right]\nonumber \\
&\leq \E\left[\sum_{t=1}^T \E_{y_t|x_t,a_t}\left[\left(G\left(h_{a_t}^\star(x_t,y_t),\frac{\theta}{\alpha}-\sigma,\sigma\right)-G\left(\hhat_{a_t}(x_t,y_t),\frac{\theta}{\alpha}-\sigma,\sigma\right)\right)^2\right]\right]\nonumber \\
&\leq \E\left[\sum_{t=1}^T\sum_{a=1}^K \E_{y|x_t,a}\left[\left(G\left(h_{a}^\star(x_t,y),\frac{\theta}{\alpha}-\sigma,\sigma\right)-G\left(\hhat_{a}(x_t,y),\frac{\theta}{\alpha}-\sigma,\sigma\right)\right)^2\right]\right]\nonumber \\
&\leq \frac{1}{\sigma^2}  \E \bra{\sum_{t=1}^T \sum_{a=1}^K \E_{y|x_t,a}\left[\pa{h_a^\star(x_t,y)-\hhat_a(x_t,y)}^2\right]} \nonumber \\
&\leq \order\left(\frac{TK \log(|\calH|T)}{\sigma^2 N}\right), \label{eq:bound_mis}
\end{align}
where the first inequality is from Jensen's inequality; the third inequality is from that $G(v,\beta,\sigma)$ is $\frac{1}{\sigma}$-Lipschitz in $v$; and the last inequality is by \pref{eq:gen_on} with $\delta=\frac{1}{T}$. Combining Eqs.~\eqref{eq:bound_regcb}-\eqref{eq:bound_mis}, we obtain that
\begin{align*}
\RegCB &\le \order\left(\gamma \RegSq+\frac{TK \gamma \log(|\calH|T)}{\sigma^2 N}+\frac{TK}{\gamma}+N\right).
\end{align*}
Picking $N=\frac{1}{\sigma}\sqrt{TK\gamma\log(|\calH|T)}$ and $\gamma=\min\left\{\sqrt{KT/\RegSq},\sigma^{-2/3}(KT)^{2/3}\log^{-1/3}(|\calH|T)\right\}$, we obtain that
\begin{align*}
\RegCB \le \order\left(\sqrt{KT\RegSq}+\sigma^{-2/3}(KT)^{2/3}\log^{1/3}(|\calH|T)\right),
\end{align*}
which finishes the proof.
\end{proof}

\section{Omitted Details in \pref{sec:experiment}}\label{app:experiment}
\subsection{Implementation Details in \pref{sec:mnist}}\label{app:mnist_details}
We use PyTorch framework \citep{paszke2019pytorch} and parameter-free optimizers~\citep{orabona2017training} to learn the reward estimator and the policy. For both algorithms, we set the number of exploration samples $N=5000$ and pick the parameter $\sigma\in\{0,0.05,0.1,0.15,0.2,0.25,0.3\}$ and $\frac{\theta}{\alpha}\in\{\frac{1}{3}+\frac{\sigma}{2}, \frac{1}{2}+\frac{\sigma}{2}\}$. For the on-policy algorithm, we use a time-varying exploration parameter $\gamma_t=\sqrt{Kt}$ as suggested by \citet{foster2021efficient}. We run the experiments on one NVIDIA GeForce RTX 2080 Ti.

\subsection{Prompt Used in Generating Dataset}\label{app:prompt_feedback}
The prompt we use in answer generation basically the question itself shown as follows. We replace ``question'' by $x$ in our experiment.
\begin{promptbox}
\{question\}
\end{promptbox}
The prompt we use in feedback generation as follows. We replace ``question'' and ``answer'' by $x$ and $a$ respectively in our experiments. We replace ``mood'' by ``satisfied'' (``not satisfied'') when the answer is generated by Qwen1.5-32B-Chat (Qwen1.5-0.5B-Chat).
\begin{promptbox}
System: You are a user simulator.  A user has been presented a Question and an Answer. Simulate the user's next statement. The user is \{mood\} with the Answer to the Question.

Question: \{question\}

Answer:\{answer\}
\end{promptbox}

\subsection{Prompt Used in Learning Inverse Kinematic Model}\label{app:prompt_ik}
The prompt we use in learning the inverse kinematic model is as follows. We replace ``question'', ``answer'', and ``feedback'' by $x$, $a$, and $y$ respectively in our experiments.
\begin{promptbox}
    System: You are a conversation evaluating agent.  Given a User's Question, an Answer, and the User's Feedback: determine if the User's Feedback is consistent with Answer.  Respond with Yes or No only.
    
    User's Question:
    \{question\}
    
    Answer:\{answer\}
    
    User's Feedback:\{feedback\}
    
    Respond with Yes or No only.

\end{promptbox}
\subsection{Prompt Used in Learning Policy}\label{app:prompt_policy}
The prompt we use in learning the reward predictor is as follows. We replace ``question'' and ``answer'' by $x$ and $a$ respectively in our experiments.
\begin{promptbox}
System: You are an Answer evaluating agent.  Given a User's Question and an Answer: assess if the Answer is good.  Respond with Yes or No only.

    User's Question:\{question\}
    
    Answer:\{answer\}
    
    Respond with Yes or No only.
\end{promptbox}

\end{document}